\definecolor{newcolor}{rgb}{.8,.349,.1}
\newcommand{\N}{\mathbb{N}}
\newcommand{\Q}{\mathbb{Q}}
\newcommand{\R}{\mathbb{R}}
    \newcommand{\E}{\mathbb{E}}
    \renewcommand{\P}{\mathbb{P}}
\newcommand{\I}{\mathbb{I}}
\newcommand{\p}{\mathrm{p}}
\theoremstyle{plain}
\newtheorem{theorem}{Theorem}[section]
\newtheorem{lemma}{Lemma}[section]
\theoremstyle{definition}
\newtheorem{definition}{Definition}[section]
\theoremstyle{remark}
\newtheorem*{remark*}{Remark}
\DeclareMathOperator{\MoM}{MoM}
\DeclareMathOperator{\SE}{SE}
\DeclareMathOperator{\Pareto}{Pareto}
\DeclareMathOperator{\rep}{rep}
\DeclareMathOperator{\SG}{SG}
\DeclareMathOperator{\G}{G}
\DeclareMathOperator{\sym}{sym}
\DeclareMathOperator{\Bin}{Bin}
\DeclareMathOperator{\Cov}{Cov}
\DeclareMathOperator{\Exp}{Exp}
\let\epsilon=\varepsilon
\DeclarePairedDelimiter\ceil{\lceil}{\rceil}
\DeclarePairedDelimiter\floor{\lfloor}{\rfloor}
\title{On the Optimality of the Median-of-Means Estimator\\under Adversarial Contamination}
\author{%
  Xabier~de~Juan$^{1}$ \quad Santiago~Mazuelas$^{1, 2}$\\
  $^1$Basque Center of Applied Mathematics (BCAM)\\ $^{2}$IKERBASQUE-Basque Foundation for Science\\
  \texttt{\{xdejuan, smazuelas\}@bcamath.org}
}
\begin{document}

\maketitle

\begin{abstract}
The Median-of-Means (MoM) is a robust estimator widely used in machine learning that is known to be (minimax) optimal in scenarios where samples are i.i.d. In more grave scenarios, samples are contaminated by an adversary that can inspect and modify the data. Previous work has theoretically shown the suitability of the MoM estimator in certain contaminated settings. However, the (minimax) optimality of MoM and its limitations under adversarial contamination remain unknown beyond the Gaussian case. In this paper, we present upper and lower bounds for the error of MoM under adversarial contamination for multiple classes of distributions. In particular, we show that MoM is (minimax) optimal in the class of distributions with finite variance, as well as in the class of distributions with infinite variance and finite absolute $(1+r)$-th moment. We also provide lower bounds for MoM's error that match the order of the presented upper bounds, and show that MoM is sub-optimal for light-tailed distributions.
\end{abstract}

\section{Introduction}
\vspace{-0.9em}
\label{sec:intro}
The Median-of-Means (MoM) estimator is a widely used one-dimensional estimator of the mean.
First introduced in the 1980s in \cite{nemirovski1983problem}, it has recently gained significant attention from the robust machine learning community.
For example, MoM has enabled the development of robust alternatives for empirical risk minimization \cite{lecam,tournament,Lecu2018RobustCV}, kernel methods \cite{pmlr-v97-lerasle19a,mom_kde}, and clustering techniques \cite{bmom}.
MoM is an attractive robust univariate mean estimator in scenarios affected by heavy tails \cite{lugosi2019,minsker,hsu2016,minsker23}.
In particular, MoM is known to be a (minimax) optimal estimator of the mean in the i.i.d.\ scenario for heavy-tailed distributions \cite{hsu2016}, together with the trimmed mean \cite{phdthesis}, \mbox{Catoni's M-estimator} \cite{catoni}, and the Lee-Valiant estimator \cite{lee}.
Moreover, as shown in \cite{laforgue21,RODRIGUEZ2019108}, MoM is also an adequate estimator in situations where data samples are contaminated by an adversary.

Adversarial contamination is a general type of data attack in which the adversary is allowed to first inspect the clean samples, then remove a proportion of them, and finally add new samples \cite{arnak}.
These types of attacks can be specifically designed to maximize the damage caused to a subsequent learning process \cite{XIAO201553,koh}.
Adversarial contamination can be especially harmful in fields where data integrity is crucial, such as cybersecurity \cite{suciu}, biometrics \cite{chen2}, and autonomous driving \cite{gu}.
Motivated by these concerns, the robust machine learning community has shown strong interest in the problem of mean estimation under adversarial contamination \cite{arnak,diakonikolas,lugosi2021,pmlr-v162-bhatt22a}.

The (minimax) optimal error under adversarial contamination depends on both the fraction of contaminated samples $\alpha$ and the class of distributions considered \cite{lugosi2021}.
The optimal error exhibits two regimes: (1) for a reduced enough number of samples, the optimal error matches that in the i.i.d.\ scenario; (2) as the number of samples increases, the optimal error plateaus at a level determined by~$\alpha$ and the class of distributions considered.
The error in the second regime is known as the asymptotic bias and describes the unavoidable error due to adversarial contamination.
Therefore, such bias characterizes the optimal error for each class of distributions.
Table~\ref{tab:summary} shows the order of the asymptotic bias corresponding to different classes of distributions and mean estimators.
\begin{table*}[!h]
    \centering
        \caption{Order of the asymptotic bias of the optimal estimation error for different classes of distributions and mean estimators. Expressions in blue denote that the order is optimal in the corresponding class, while empty cells indicate unknown results. We denote by $\alpha$ the fraction of contaminated samples.}
        \vspace{0.2in}
    \renewcommand{\arraystretch}{1.2}
    \begin{tabular*}{0.95\textwidth}{@{\extracolsep{\fill}}lccc}
           \toprule
           & \begin{tabular}{@{}c@{}}MoM \end{tabular} & \begin{tabular}{@{}c@{}}Trimmed Mean \end{tabular}    & \begin{tabular}{@{}c@{}}M-estimator\end{tabular}   \\
         \hline
        Finite Variance: $\mathcal{P}_2$  & \color{blue} $\sqrt{\alpha}$ \ \color{black} [this paper] & \color{blue} $\sqrt{\alpha}$\ \color{black} \cite{lugosi2021} & \color{blue} $\sqrt{\alpha}$ \ \color{black}  \cite{pmlr-v162-bhatt22a}\\
        Infinite Variance: $\mathcal{P}_{1+r}$ &   \color{blue} $\alpha^{\frac{r}{1+r}}$  \ \color{black} [this paper]&   & \color{blue} $\alpha^{\frac{r}{1+r}}$ \ \color{black} \cite{pmlr-v162-bhatt22a}\\
        Sub-Gaussian: $\mathcal{P}_{\SG}$ &  $\alpha^{2/3}$ [this paper] & \color{blue} $\alpha\sqrt{\log(1/\alpha)}$  \color{black}  \cite{lugosi2021} &  \\
        Symmetric: $\mathcal{P}_{\sym}^{\epsilon_0,c}$ & \color{blue} $\alpha$ \ \color{black} [this paper] &  & \\
        Gaussian: $\mathcal{P}_{\G}$  & \color{blue} $\alpha$ \ \color{black} e.g., \cite{bateni,DK22} &  $\alpha\sqrt{\log(1/\alpha)}$ \cite{lugosi2021}     &  \\
        \bottomrule
    \end{tabular*}
    \label{tab:summary}
\end{table*}

In the class of distributions with finite variance, the optimal order of the asymptotic bias is $\sqrt{\alpha}$, and both the trimmed mean and Catoni's M-estimator are known to be optimal in this class \cite{lugosi2021,pmlr-v162-bhatt22a}.
In the class of distributions with infinite variance and finite absolute ($1+r$)-th moment, the optimal order of the asymptotic bias becomes $\alpha^{\frac{r}{1+r}}$, and Catoni's M-estimator is known to be optimal in this class \cite{pmlr-v162-bhatt22a}.
In the class of sub-Gaussian distributions, the optimal asymptotic bias improves to $\alpha\sqrt{\log(1/\alpha)}$, and the trimmed mean is known to be optimal in this class \cite{lugosi2021}.
The asymptotic bias can be further improved to $\alpha$ in the class of Gaussian distributions \cite{chen}, but the trimmed mean is known to be sub-optimal in this class \cite{lugosi2021}.

Previous work has theoretically shown the suitability of the MoM estimator in certain contamination scenarios.
In particular, the results in \cite{laforgue21} provide upper bounds for MoM's error for finite-variance distributions, with rates matching the optimal order in the i.i.d.\ scenario.
In addition, MoM has been shown to be (minimax) optimal for Gaussian distributions since it generalizes the sample median (see e.g., \cite{bateni}, \citep[Cor.~1.15]{DK22}).
However, the (minimax) optimality of MoM remains unknown beyond the Gaussian case, as mentioned in \cite{pmlr-v162-bhatt22a}.
Specifically, previous work on MoM considered a more benign contamination model and only studied the regime with a reduced number of samples, i.e., not the asymptotic bias.
Moreover, the limitations of MoM under adversarial contamination remain unknown, as no lower bounds on its error have been established.

This paper provides upper and lower error bounds for the MoM estimator for multiple classes of distributions under adversarial contamination (see Table~\ref{tab:summary}).
In particular, the results reveal that MoM is (minimax) optimal for heavy-tailed and symmetric distributions, but sub-optimal for light-tailed distributions.
Specifically, the main contributions in the paper are as follows:
\vspace{-0.3em}
\begin{itemize}
    \setlength\itemsep{0.3em}
    \item We prove that MoM is optimal under adversarial contamination in the class of distributions with finite variance, as well as in the class of distributions with infinite variance and finite absolute $(1+r)$-th moment.
    
    \item We obtain upper bounds for the error of MoM in the classes of sub-exponential and sub-Gaussian distributions, which improve upon those established for the finite variance case.
    
    \item We obtain lower bounds for MoM that match the order of the presented upper bounds.
    In particular, we prove that MoM cannot fully exploit light tails and is sub-optimal for sub-exponential distributions.

    \item We prove that MoM is optimal under adversarial contamination in a class of symmetric distributions that includes Gaussians and heavy-tailed distributions such as Student's $t$.
\end{itemize}
\vspace{-0.3em}

The rest of this paper is organized as follows. Section~\ref{sec:prem} describes the problem of mean estimation under adversarial contamination.
In Section~\ref{sec:main}, we present the optimality results for distributions with finite variance, and infinite variance with finite absolute $(1+r)$-th moment. 
In Section~\ref{sec:light_tail}, we present error bounds for MoM for light-tailed distributions, where we obtain improved orders compared to the finite variance case.
In Section~\ref{sec:sym}, we prove that MoM attains even better orders in a class of symmetric distributions.
All proofs are deferred to Appendices~\ref{app:proof} and \ref{app:LB}.
Finally, we illustrate the results in the paper with numerical experiments in Appendix~\ref{app:exp}.

\textbf{Notations.} 
For a real number $x$, $\ceil{x}$ denotes the smallest integer greater than or equal to $x$, and $\floor{x}$ denotes the greatest integer less than or equal to $x$.
For $s\geq 1$, $\mathcal{P}_s$ denotes the set of all probability distributions with finite absolute $s$-th moment.
Given a set of real numbers \mbox{$x_1,x_2,\ldots,x_n$}, we denote by \mbox{$x_{(1)},x_{(2)},\ldots,x_{(n)}$} their non-decreasing rearrangement.
The notation $x\lesssim y$ describes cases where there exists a constant $c$ such that $x\leq cy$.
In addition, the notation $x\asymp y$ describes cases where $x\lesssim y$ and $y\lesssim x$.

\section{Preliminaries}
\label{sec:prem}
This section begins by formulating the problem of one-dimensional mean estimation under adversarial contamination.
We then discuss (minimax) optimal mean estimators and conclude by recalling the definition of the MoM estimator.

\subsection{Contamination model}
Let $X_1^*,X_2^*,\ldots,X_n^*\in\R$ be i.i.d.\ samples drawn from a distribution $\p$ with finite mean $\mu_{\p}$ and variance $\sigma^2_{\p}$.
For some contamination fraction $\alpha$, the adversary changes at most $\alpha n$ of the samples, and the resulting contaminated samples are denoted as $ X_1,X_2,\ldots, X_n$.
This situation is referred to as adversarial contamination, and can be mathematically modeled as follows.
\begin{definition}[Adversarial contamination]
    We say that $X_1,X_2,\ldots,X_n$ are \mbox{$\alpha$-contaminated} samples if there exist i.i.d.\ samples $X_1^*,X_2^*,\ldots,X_n^*$ and indexes \mbox{$1\leq i_{1}< i_2< \cdots< i_r\leq n$ with $r\leq (1-\alpha)n$}, so that
\begin{align}
    \{  X_{i_1}, X_{i_2}, \ldots, X_{i_r} \} \subseteq \{X_1^*, X_2^*,\ldots, X_n^*\}. \label{eq:contaminacion}  
\end{align}
\end{definition}
Adversarial contamination generalizes the additive contamination model (also called $\mathcal{O}\cup\mathcal{I}$ model).
In the additive case, the contaminated samples $ X_1, X_2,\ldots, X_n$ contain $(1-\alpha)n$ inliers drawn independently from $\p$ together with $\alpha n$ contaminated samples, for which no specific assumption is made.
Adversarial contamination describes more general and harmful types of attacks than the additive model.
Conceptually, the distinction between the two contamination models can be understood as follows: under adversarial contamination, the adversary can selectively discard at most ${\alpha n}$ of the samples after inspecting their values, whereas in the additive model, the adversary discards samples randomly.
Mathematically, under adversarial contamination, the samples $\{  X_{i_1}, X_{i_2}, \ldots, X_{i_r} \}$ in \eqref{eq:contaminacion} just need to be contained in $\{X_1^*, X_2^*,\ldots, X_n^*\}$, whereas in the additive contamination model they also have to be independent.

\subsection{Minimax optimal mean estimators}
The goal of mean estimation under adversarial contamination is to approximate the mean $\mu_\p$ using an estimator $\widehat\mu = \widehat\mu(X_1,X_2,\ldots,X_n)$ evaluated at $\alpha$-contaminated samples.
Given a confidence parameter $\delta>0$, we seek to guarantee that, with probability at least $1-\delta$
\begin{align*}
    | \widehat\mu -\mu_\p| \leq   \sigma_\p  \epsilon(n,\delta,\alpha)
\end{align*}
where $\epsilon(n,\delta,\alpha)$ is as small as possible while guaranteeing the bound holds uniformly over all distributions $\p\in\mathcal{P}$, where $\mathcal{P}$ is a class of distributions.

Previous work on mean estimation under adversarial contamination has shown that in the class $\mathcal{P}_2$ (distributions with finite variance), with probability $1-\delta$, the estimation error is at best proportional to
\begin{align}
  \sigma_\p  \left( \sqrt{\frac{\log(2/\delta)}{n}} +\sqrt{\alpha}\right).  \label{eq:optim_bound} 
\end{align}
Specifically, the results in \cite{lugosi2021} show that \eqref{eq:optim_bound} is the (minimax) optimal error in $\mathcal{P}_2$.
Namely, for any estimator $\widehat\mu$ of the mean, there exists a distribution $\p\in\mathcal{P}_2$ and an adversarial attack such that with probability $1-\delta$, the error $|\widehat\mu-\mu_\p|$ is lower bounded by \eqref{eq:optim_bound}, up to multiplicative constants.
In addition, the expression \eqref{eq:optim_bound} also provides an upper bound for some estimator $\widehat\mu$ for every $\p\in\mathcal{P}_2$ and any adversarial attack.

The order of the optimal error in equation \eqref{eq:optim_bound} exhibits two regimes depending on the sample size~\cite{lugosi2021}: 
(1) for a reduced enough number of samples (when $n\leq \log(2/\delta)/\alpha$), the order of the bound takes the form $ \sqrt{{\log(2/\delta)}/{n}}$, which matches the optimal order in the i.i.d.\ scenario; (2) as the number of samples increases (when $n\geq \log(2/\delta)/\alpha $), the optimal order plateaus at $\sqrt{\alpha}$, which is known as the asymptotic bias because it does not decrease with the number of samples.
The error in the second regime describes the inevitable error caused by adversarial contamination.

For distributions with infinite variance, there is a similar optimality result \cite{pmlr-v162-bhatt22a}.
Specifically, if $\mathcal{P}_{1+r}$ is the class of distributions with finite absolute $(1+r)$-th moment, the optimal estimation error in $\mathcal{P}_{1+r}$ is proportional to
\begin{align}
    v_r^{\frac{1}{1+r}}\left(  \left({\frac{\log(2/\delta)}{n}}\right)^{\frac{r}{1+r}}  +{\alpha}^{\frac{r}{1+r}}      \right) \label{eq:optim_inf}
\end{align}
where $v_r = \mathbb{E}_{X\sim \p}|X-\mu_\p|^{1+r}$ denotes the absolute \mbox{$(1+r)$-th} central moment of $\p\in\mathcal{P}_{1+r}$.
As in the finite-variance case, the first term of the optimal error \eqref{eq:optim_inf} matches the optimal error in the i.i.d.\ scenario for $\mathcal{P}_{1+r}$, whereas the second term $\alpha^{\frac{r}{1+r}}$ is the asymptotic bias.

The optimal asymptotic bias has different orders depending on the class of distributions considered (see Table~\ref{tab:summary} for a summary).
In the class $\mathcal{P}_2$, both the trimmed mean and Catoni's M-estimator are known to be optimal \cite{lugosi2021,pmlr-v162-bhatt22a}.
In addition, in the class $\mathcal{P}_{1+r}$, Catoni's M-estimator is known to be optimal \cite{pmlr-v162-bhatt22a}.
If one considers a more restrictive class of distributions $\mathcal{P}\subset \mathcal{P}_2$, the order of the optimal asymptotic bias can be improved.
For example, in the class of sub-Gaussian distributions $\mathcal{P}_{\SG}$, the optimal asymptotic bias is given by $\alpha\sqrt{\log(1/\alpha)}$, and the trimmed mean is optimal in this class \cite{lugosi2021}.
In addition, in the class of Gaussian distributions $\mathcal{P}_{\G}$, the optimal asymptotic bias is ${\alpha}$~\cite{chen}.
However, the trimmed mean is known to be sub-optimal in $\mathcal{P}_{\G}$ \cite{lugosi2021}.

\subsection{Median-of-Means}
Let $X_1,X_2,\ldots,X_n$ be random samples and $I_1, I_2, \ldots, I_k$ be $k$ random disjoint blocks of the indices $\{1,2\ldots,n\}$ with equal size $m=\floor{n/k}$.\footnote{As is commonly done for the analysis of the MoM estimator, by taking the floor function we ensure that all the blocks have the same size even when $k$ does not divide $n$. In practice, some blocks can have larger sizes to use all the samples, but this does not affect the theoretical results presented.}
Then, the MoM estimator is defined as the median of the means corresponding to different blocks, that is
\begin{align*}
    \widehat\mu_{\text{MoM}} = \widehat\mu_{(\ceil{k/2})}
\end{align*}
where
\begin{align*}
    \widehat\mu_i = \frac{1}{m} \sum_{j\in I_i} X_j, \quad i=1,2,\ldots,k
\end{align*}
and $\widehat\mu_{(\ceil{k/2})}$ denotes the $\ceil{k/2}$-th order statistic.

In the following sections, we present concentration inequalities for MoM under adversarial contamination for different classes of distributions.

\section{Optimality of the MoM estimator for heavy-tailed distributions}
\label{sec:main}
In this section, we establish the optimality of the MoM estimator under adversarial contamination for general classes of distributions.
We first define the quantile function which describes the behavior of averages of i.i.d.\ inliers.
\begin{definition}
\label{def:Q_m}
    Let $X_1^*,X_2^*,\ldots,X_m^*$ be $m$ i.i.d.\ random variables with distribution $\p$ and finite mean $\mu_\p$.
    We denote by $Q_m$ the quantile function of the random variable 
    \begin{align*}
        B_m = \frac{X_1^*+X_2^*+\cdots + X_m^*}{m}-\mu_\p.
    \end{align*}
    That is, $Q_m(q) = \inf\{ x \in \R : q \leq \P[ B_m \leq x ] \}$ for $q\in[0,1]$.
\end{definition}

The bounds presented in the paper are consequences of the next theorem that shows how MoM's error is described by the behavior of averages of i.i.d.\ inliers around their median.
In contrast, the error in other robust estimators is given by the behavior of the tails of the distribution \cite{lugosi2021}.

\begin{theorem}
\label{thm:mom}
Let $\widehat\mu_{\MoM}$ be the MoM estimator with $k$ blocks of size $m=\floor{n/k}$ evaluated at $n$ $\alpha$-contaminated samples.
If the number of blocks satisfies $2\alpha n < k \leq n$, then for all \mbox{$\delta > 2\exp(-2k(1/2-\alpha m)^2)$}
    \begin{equation}
    \begin{aligned}
         | \widehat\mu_{\MoM} -\mu_\p| \leq \max \Biggl\{ Q_m\left(\frac{1}{2}  + \sqrt{\frac{\log(2/\delta)}{2k} } + {\alpha}{m}  \right) ,
        - &Q_m\left( \frac{1}{2} - \sqrt{\frac{\log(2/\delta)}{2k}  } - {\alpha}{m}   \right) \Biggr\} \label{eq:mom_general}
    \end{aligned}
    \end{equation}
holds with probability at least $1-\delta$.
\end{theorem}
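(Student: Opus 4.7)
The plan is to control the two tails of MoM's error separately and then combine them by a union bound. Because the $k$ blocks $I_1,\ldots,I_k$ are disjoint, each of the at most $\alpha n$ corrupted samples influences at most one block; hence at most $\alpha n$ of the empirical block means $\widehat\mu_i$ differ from their uncontaminated counterparts $\widehat\mu_i^{*} := \tfrac{1}{m}\sum_{j\in I_i} X_j^{*}$, while the centered averages $\widehat\mu_i^{*}-\mu_\p$ are i.i.d.\ copies of the random variable $B_m$ from Definition~\ref{def:Q_m}.

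For the upper tail, set $t := Q_m\bigl(\tfrac{1}{2} + \sqrt{\log(2/\delta)/(2k)} + \alpha m\bigr)$. Since $\widehat\mu_{\MoM}$ is the $\lceil k/2\rceil$-th order statistic of the $\widehat\mu_i$, the event $\widehat\mu_{\MoM}-\mu_\p>t$ forces at least $\lceil k/2\rceil$ of the $\widehat\mu_i$ to exceed $\mu_\p+t$; since the adversary can perturb at most $\alpha n$ blocks, this in turn forces at least $\lceil k/2\rceil-\alpha n$ of the uncontaminated means $\widehat\mu_i^{*}$ to exceed $\mu_\p+t$. By the defining property of $Q_m$, $\P[B_m>t]\leq \tfrac{1}{2}-\sqrt{\log(2/\delta)/(2k)}-\alpha m$, so the count of exceeding uncontaminated means is stochastically dominated by a $\Bin\bigl(k,\tfrac{1}{2}-\sqrt{\log(2/\delta)/(2k)}-\alpha m\bigr)$ random variable. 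Hoeffding's inequality then bounds the probability that such a binomial reaches $\lceil k/2\rceil - \alpha n$ by $\exp(-2k\cdot\log(2/\delta)/(2k)) = \delta/2$, because the required deviation from the mean is exactly $\sqrt{k\log(2/\delta)/2}$. A symmetric argument with $-Q_m\bigl(\tfrac{1}{2}-\sqrt{\log(2/\delta)/(2k)}-\alpha m\bigr)$ handles the lower tail, and a union bound over the two events yields the inequality.

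The hypotheses $k>2\alpha n$ and $\delta > 2\exp(-2k(\tfrac{1}{2}-\alpha m)^2)$ enter precisely to ensure that the quantile arguments $\tfrac{1}{2}\pm\sqrt{\log(2/\delta)/(2k)}\pm\alpha m$ lie in $(0,1)$ (so that $Q_m$ is well-defined at these points and the dominating Bernoulli parameter is a valid probability), and that $\lceil k/2\rceil-\alpha n$ is strictly positive so that Hoeffding is applied in its non-trivial regime. The main technical delicacy is the bookkeeping that links the worst-case fraction of adversarially corrupted blocks (at most $\alpha n/k$) to the quantile shift $\alpha m$ appearing in the statement; since $m=\lfloor n/k\rfloor$, the two differ by at most $\alpha$ due to the floor, and this rounding must be carefully absorbed. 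Everything else---the definition of $Q_m$, Hoeffding's inequality for Bernoulli sums, and the final union bound---is routine.
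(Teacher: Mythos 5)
Your proposal is correct and takes essentially the same route as the paper: you sandwich the MoM (the $\ceil{k/2}$-th order statistic of the contaminated block means) between shifted order statistics of the clean block means, and control those by a binomial-count argument with Hoeffding's inequality plus a union bound over the two tails. The only difference is presentational: the paper packages the binomial/Hoeffding step as a separate order-statistic-versus-quantile lemma (Lemma~\ref{lemma:quantil}, adapted from Vovk) and then applies it to the block means, whereas you inline that computation directly; the $\floor{n/k}$ rounding slack you flag is treated at the same level of detail in the paper's own proof.
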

 
\begin{proof}[Sketch of proof.]
The full proof is given in Appendix~\ref{proof:thm:mom}.

Without loss of generality, we assume $\mu_\p=0$.
Let $\{ \widehat\mu_i^*\}_{i=1}^k$ be the sample means of the different blocks before the attack of the adversary, and let 
$\{ \widehat\mu_i \}_{i=1}^k$
be the sample means of the different blocks after the adversary's attack.
By the definition of adversarial contamination, the adversary can modify at most $\alpha n$ samples, which affects at most $\alpha n$ of the block means.
Therefore, the sets $\{ \widehat\mu_i^*\}_{i=1}^k$ and $\{ \widehat\mu_i \}_{i=1}^k$ differ in at most $\alpha n$ elements.

Since MoM is the median of the block means, i.e., $\widehat\mu_{\MoM} = \widehat\mu_{(\ceil{k/2})}$, we have \mbox{$\widehat{\mu}_{( \ceil{k/2}-\alpha n  )}^*\leq \widehat\mu_{\MoM} \leq \widehat{\mu}_{( \ceil{k/2}+\alpha n  )}^*$}.
Then, the result in \eqref{eq:mom_general} follows since the empirical quantiles $\widehat{\mu}_{( \ceil{k/2}-\alpha n  )}^*$ and $\widehat{\mu}_{( \ceil{k/2}+\alpha n  )}^*$ are close to the actual quantiles $Q_m(1/2+\alpha m)$ and \mbox{$Q_m(1/2-\alpha m)$}.
\end{proof}
 
The result above presents a bound for the error of MoM under adversarial contamination that is valid with wide generality.
In the next subsections, we derive from Theorem~\ref{thm:mom} the optimal bounds for the different classes of distributions, adjusting the block size.
This is achieved by bounding the quantile function $Q_m$ around $1/2$, uniformly over the class of distributions.
A general approach to obtain a reduced bound for the right-hand side of \eqref{eq:mom_general} is to increase the blocks' size $m$. 
For instance, if $\p$ has finite variance, we have the bound 
\begin{align}
    Q_m(1/2 + \epsilon) \lesssim \frac{1}{\sqrt{m(1/2-\epsilon)}} \label{eq:q_m_bound}
\end{align}
as a consequence of Chebyshev's inequality, for any \mbox{$\epsilon\in[0,1/2)$.}
Better bounds for \eqref{eq:mom_general} can be obtained if the distribution $\p$ enjoys certain symmetry. 
For instance, if $\p$ is a Gaussian distribution, for any $\epsilon\in[0,1/2)$, 
\begin{align}
    Q_m(1/2 + \epsilon) \lesssim  \frac{\epsilon}{\sqrt{m}}
\end{align}
and we can obtain better bounds than \eqref{eq:q_m_bound} 
by decreasing $\epsilon$ towards zero.

\subsection{Finite variance}
\label{sec:finite_variance}
The next result establishes the optimality of MoM under adversarial contamination in the class $\mathcal{P}_2$ formed by distributions with finite variance.
\begin{theorem}
\label{thm:optim_mom}
Let $\widehat\mu_{\MoM}$ be the MoM estimator with a number of blocks 
\begin{align}
    k = \max\left\{ \ceil*{\frac{ \log(2/\delta)}{(1/2-1/\gamma)^2}}, \ceil{ \gamma\alpha n}\right\} \label{eq:k3.2}
\end{align}
for any $\gamma\in(2,2.5]$ and \mbox{$\delta > 2\exp(-(1/2-1/\gamma)^2 n)$}. 
Then, there exists a positive constant \mbox{$C(\gamma)\asymp(1/2-1/\gamma)^{-3/2}$} such that for any $\p\in\mathcal{P}_2$ and \mbox{$\alpha \leq 0.4$}
    \begin{align}
        | \widehat\mu_{\MoM}- \mu_\p| \leq C(\gamma) \sigma_\p\left( \sqrt{\frac{\log(2/\delta)}{n}} + \sqrt{\alpha}      \right) \label{eq:bound_finite_var}
    \end{align}
holds with probability at least $1-\delta$.
\end{theorem}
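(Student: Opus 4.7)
The plan is to obtain \eqref{eq:bound_finite_var} by specializing the general bound \eqref{eq:mom_general} of Theorem~\ref{thm:mom} to distributions with finite variance and to the block count $k$ in \eqref{eq:k3.2}. The key quantitative step is to control the quantile function $Q_m$ near $1/2$ via Chebyshev's inequality in the form \eqref{eq:q_m_bound}, and then to translate $1/\sqrt{m}$ into the two terms in \eqref{eq:bound_finite_var} by exploiting the fact that $k$ is defined as the maximum of two expressions, each of which dominates in one of the two regimes (small sample size vs.\ asymptotic bias).

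First I would verify that Theorem~\ref{thm:mom} is applicable with this $k$. The inequality $2\alpha n < k$ follows from $k \geq \gamma\alpha n$ with $\gamma > 2$, and $k \leq n$ is guaranteed by $\gamma\alpha \leq 1$ (since $\gamma \leq 2.5$ and $\alpha \leq 0.4$) together with $\log(2/\delta)/(1/2-1/\gamma)^2 < n$, which is exactly the hypothesis $\delta > 2\exp(-(1/2-1/\gamma)^2 n)$. The confidence condition $\delta > 2\exp(-2k(1/2-\alpha m)^2)$ in Theorem~\ref{thm:mom} reduces to the same observation: since $\alpha m \leq \alpha n/k \leq 1/\gamma$, one has $1/2-\alpha m \geq 1/2-1/\gamma$, and $k \geq \log(2/\delta)/(1/2-1/\gamma)^2$ then forces $2k(1/2-\alpha m)^2 \geq 2\log(2/\delta)$.

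Next I would bound the two quantiles in \eqref{eq:mom_general} at $1/2 \pm \epsilon$ with $\epsilon = \sqrt{\log(2/\delta)/(2k)} + \alpha m$. The same two ingredients $k \geq \log(2/\delta)/(1/2-1/\gamma)^2$ and $\alpha m \leq 1/\gamma$ give $\epsilon \leq (1/2-1/\gamma)/\sqrt{2} + 1/\gamma$, hence $1/2 - \epsilon \gtrsim (1/2-1/\gamma)$. Applying \eqref{eq:q_m_bound} (Chebyshev in the form $Q_m(1/2+\epsilon) \lesssim \sigma_\p/\sqrt{m(1/2-\epsilon)}$, valid by symmetry for the lower quantile as well) yields
\begin{equation*}
\max\bigl\{Q_m(1/2+\epsilon),\,-Q_m(1/2-\epsilon)\bigr\} \lesssim \frac{\sigma_\p}{\sqrt{m}\,(1/2-1/\gamma)^{1/2}},
\end{equation*}
so \eqref{eq:mom_general} becomes $|\widehat\mu_{\MoM}-\mu_\p| \lesssim \sigma_\p/(\sqrt{m}\,(1/2-1/\gamma)^{1/2})$.

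Finally I would convert $1/\sqrt{m}$ into the two advertised terms by splitting into cases according to which argument of \eqref{eq:k3.2} attains the maximum. Using $m \geq n/(2k)$ (valid when $n \geq 2k$, with the corner cases $k$ close to $n$ treated directly since they force $\sqrt{\alpha} \gtrsim 1$), one has $1/\sqrt{m} \leq \sqrt{2k/n}$. If the first term achieves the maximum, then $k \lesssim \log(2/\delta)/(1/2-1/\gamma)^2$, producing the $\sqrt{\log(2/\delta)/n}\,(1/2-1/\gamma)^{-3/2}$ contribution; if the second term does, then $k \lesssim \gamma\alpha n$, producing the $\sqrt{\alpha}\,(1/2-1/\gamma)^{-1/2}$ contribution. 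Taking the worse of the two $\gamma$-dependences yields $C(\gamma) \asymp (1/2-1/\gamma)^{-3/2}$. The main obstacle will be tracking this constant sharply: the Chebyshev step alone already costs a factor $(1/2-1/\gamma)^{-1/2}$, and any loose handling of $1/2-\epsilon$ or of $m \gtrsim n/k$ would inflate $C(\gamma)$ beyond the advertised order.
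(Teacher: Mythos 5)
Your proposal is correct and follows essentially the same route as the paper's proof: apply Theorem~\ref{thm:mom}, bound $Q_m(1/2\pm\epsilon)$ by Chebyshev's inequality with $1/2-\epsilon\gtrsim 1/2-1/\gamma$, and split into the two regimes of \eqref{eq:k3.2} to convert $1/\sqrt{m}$ into the $\sqrt{\log(2/\delta)/n}$ and $\sqrt{\alpha}$ terms, tracking the same $(1/2-1/\gamma)^{-3/2}$ constant. The only cosmetic difference is that the paper phrases the case split via ranges of $\delta$ and uses $\floor{x}\geq x/2$ for $x\geq 1$ directly (so no separate corner-case treatment is needed), but the substance is identical.
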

\begin{proof}[Sketch of proof.]
    The full proof is given in Appendix~\ref{proof:thm:optim_mom}.

    The key step is to bound the quantile function $Q_m$ appearing in Theorem~\ref{thm:mom}.
    Since $\p$ has finite variance, Chebysev's inequality implies that, for any $\epsilon\in[0,1/2)$
    \begin{align*}
        Q_m(1/2 + \epsilon) \leq \frac{\sigma_\p}{\sqrt{m(1/2-\epsilon)}}. \label{eq:chebysev_sketch}
    \end{align*}
    The same bound also holds for  $-Q_m(1/2 - \epsilon)$.
    Thus, both quantiles of interest scale as $\mathcal{O}(\sigma_\p/\sqrt{m})$.
    
    For the choice of $k$ in the theorem, namely
    \begin{align*}
        k \asymp \log(2/\delta) + \alpha n
    \end{align*}
    we get 
    \begin{align*}
        m \asymp {n}/({\log(2/\delta)+\alpha n}).
    \end{align*}
    Substituting this into the Chebyshev bound gives
    \begin{align*}
        \frac{\sigma_\p}{\sqrt{m(1/2-\epsilon)}} \lesssim \sigma_\p\left(\sqrt{\frac{\log(2/\delta)}{n}}+\sqrt{\alpha}\right).
    \end{align*}
    Finally, applying Theorem~\ref{thm:mom}, which relates the performance of the MoM estimator to bounds on $Q_m$, we conclude that
    \begin{align}
        | \widehat\mu_{\MoM}- \mu_\p|\lesssim \sigma_\p\left(\sqrt{\frac{\log(2/\delta)}{n}}+\sqrt{\alpha}\right)
    \end{align}
    with probability at least $1-\delta$.
\end{proof}
 
The result above shows that MoM is (minimax) optimal in the class $\mathcal{P}_2$.
Moreover, Theorem~\ref{thm:optim_mom} generalizes existing robustness results for MoM under heavy tails. 
Specifically, without contamination (i.e., \mbox{$\alpha=0$}) the bound \eqref{eq:bound_finite_var} recovers the known sub-Gaussianity result for MoM in the i.i.d.\ scenario~\cite{hsu2016}.
Theorem~\ref{thm:optim_mom} also generalizes existing results for MoM in scenarios with contaminated samples.
Specifically, the results in \cite{laforgue21} provide upper bounds for MoM's error with rates that match the optimal order in the i.i.d.\ scenario.
However, such results are limited to scenarios with additive contamination and only characterize the first optimality regime.
In particular, existing bounds apply when the number of samples satisfies $n\lesssim \log(2/\delta)/\alpha$, and therefore do not characterize the asymptotic bias (see e.g., \citep[Prop.~2]{laforgue21}).

Theorem~\ref{thm:optim_mom} shows that the contamination tolerance of MoM is comparable to other estimators.
In fact, requiring a maximum tolerance for $\alpha$ is standard for mean estimators \cite{lugosi2021,pmlr-v162-bhatt22a}.
For instance, existing results establish a contamination tolerance of $\alpha\leq0.13$ for the trimmed mean, and of $\alpha\leq0.36$ for Catoni's M-estimator \cite{lugosi2021,pmlr-v162-bhatt22a}.

The best known leading constants in the error bounds for the i.i.d.\ scenario are significantly smaller than the one in Theorem~\ref{thm:optim_mom} (e.g., the bound for Lee-Valiant estimator in the i.i.d.\ scenario has a leading constant of $\sqrt{2}$ \cite{lee}).
As in other works for adversarial contamination \cite{lugosi2021}, the focus of this paper is on establishing optimal rates rather than minimizing constants.

\subsection{Infinite variance}
The next result shows that MoM is also optimal in the class $\mathcal{P}_{1+r}$ of distributions whose absolute $(1+r)$-th moment $v_r$ is finite for $r\in(0,1)$.

\begin{theorem}
\label{thm:optim_mom_inf}
Let $\widehat\mu_{\MoM}$ be the MoM estimator with a number of blocks 
\begin{align}
    k = \max\left\{ \ceil*{\frac{ \log(2/\delta)}{(1/2-1/\gamma)^2}}, \ceil{ \gamma\alpha n}\right\} \label{eq:k3.3}
\end{align}
for any $\gamma\in(2,2.5]$ and $\delta > 2\exp(-(1/2-1/\gamma)^2 n)$. 
Then, there exists a constant \mbox{$C(\gamma)\asymp(1/2-1/\gamma)^{-\frac{2r+1}{1+r}}$} such that for any $\p\in\mathcal{P}_{1+r}$ and $\alpha\leq 0.4$
\begin{align}
        | \widehat\mu_{\MoM}- \mu_\p| \leq C(\gamma) v_r^{\frac{1}{1+r}}\left( \left({\frac{\log(2/\delta)}{n}}\right)^{\frac{r}{1+r}} + {\alpha}^{\frac{r}{1+r}}      \right)     \label{eq:con_inf}
\end{align}
holds with probability at least $1-\delta$.
\end{theorem}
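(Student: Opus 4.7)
The plan is to closely mirror the proof of Theorem~\ref{thm:optim_mom}, applying Theorem~\ref{thm:mom} to reduce the error of MoM to the behaviour of the quantile function $Q_m$ of $B_m$ near $1/2$, and then replacing the Chebyshev step by a moment bound tailored to $\mathcal{P}_{1+r}$. First, I verify that the choice of $k$ in \eqref{eq:k3.3} meets the hypotheses of Theorem~\ref{thm:mom}: the branch $k \geq \lceil \gamma \alpha n \rceil$ with $\gamma > 2$ guarantees $2\alpha n < k$ and forces $\alpha m = \alpha n/k \leq 1/\gamma$, so that $1/2 - \alpha m \geq 1/2 - 1/\gamma > 0$; the branch $k \geq \lceil \log(2/\delta)/(1/2-1/\gamma)^2 \rceil$, combined with the hypothesis $\delta > 2\exp(-(1/2-1/\gamma)^2 n)$, simultaneously ensures $k \leq n$ and $\delta > 2\exp(-2k(1/2-\alpha m)^2)$.

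The main new ingredient is a quantile bound uniform over $\mathcal{P}_{1+r}$. Since $B_m$ is the average of $m$ i.i.d.\ centred variables with absolute $(1+r)$-th moment equal to $v_r$ and $1+r \in (1,2)$, the Bahr--Esseen inequality yields $\mathbb{E}|B_m|^{1+r} \leq 2 v_r / m^r$. Markov's inequality then gives
\begin{align*}
Q_m(1/2 + \epsilon),\quad -Q_m(1/2 - \epsilon) \leq \left(\frac{2 v_r}{m^r(1/2-\epsilon)}\right)^{\frac{1}{1+r}}
\end{align*}
for every $\epsilon \in [0,1/2)$. I will apply this with $\epsilon = \sqrt{\log(2/\delta)/(2k)} + \alpha m$, where the choice of $k$ implies $1/2 - \epsilon \gtrsim 1/2 - 1/\gamma$, uniformly over $\p \in \mathcal{P}_{1+r}$.

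Finally, substituting the block size $m \asymp n/(\log(2/\delta) + \alpha n)$ gives $1/m^r \lesssim (1/2-1/\gamma)^{-2r}\bigl((\log(2/\delta)+\alpha n)/n\bigr)^{r}$. Taking the $1/(1+r)$-th power, subadditivity of $x \mapsto x^{r/(1+r)}$ splits the result into the two terms $(\log(2/\delta)/n)^{r/(1+r)}$ and $\alpha^{r/(1+r)}$ appearing in \eqref{eq:con_inf}, while the factor $(1/2-\epsilon)^{-1/(1+r)} \lesssim (1/2-1/\gamma)^{-1/(1+r)}$ produces the remaining dependence on $\gamma$, yielding the claimed constant of order $(1/2-1/\gamma)^{-(2r+1)/(1+r)}$. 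The main obstacle, and the only point where the argument deviates substantively from the finite variance case, is establishing this sharp $(1+r)$-th moment bound for $B_m$: the Bahr--Esseen inequality supplies the correct $m^{-r}$ decay needed for the resulting quantile to match the $\alpha^{r/(1+r)}$ order of the asymptotic bias in $\mathcal{P}_{1+r}$, whereas a naive application of Markov's inequality to individual summands would only give the sub-optimal rate $\alpha^{r/(1+r)^2}$.
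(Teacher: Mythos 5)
Your proposal is correct and follows essentially the same route as the paper's proof: reduce to Theorem~\ref{thm:mom}, bound $Q_m(1/2\pm\epsilon)$ via Markov combined with the Bahr--Esseen bound $\E|B_m|^{1+r}\leq 2v_r/m^r$, and exploit the choice of $k$ so that $1/2-\epsilon\gtrsim 1/2-1/\gamma$ and $m\asymp n/(\log(2/\delta)+\alpha n)$. The only cosmetic difference is that the paper's appendix argues by cases on which term of the max determines $k$, whereas you substitute $m\asymp n/(\log(2/\delta)+\alpha n)$ and use subadditivity of $x\mapsto x^{r/(1+r)}$ — the same computation, yielding the same constant order $(1/2-1/\gamma)^{-(2r+1)/(1+r)}$.
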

\begin{proof}[Sketch of proof:]
    The full proof is given in Appendix~\ref{proof:thm:optim_mom_inf}.

    The argument parallels Theorem~\ref{thm:optim_mom}.
    The key step is to bound the quantile function $Q_m$ in Theorem~\ref{thm:mom}.
    By combining Markov's inequality with the Bahr-Esseen inequality \citep[Thm.~2]{bahr65}, for any $\epsilon\in[0,1/2)$ we get
    \begin{align*}
        Q_m(1/2+\epsilon) \lesssim \left(\frac{v_r}{m^r(1/2-\epsilon)}\right)^{\frac{1}{1+r}}
    \end{align*}
    and the same bound also holds for $-Q_m(1/2 - \epsilon)$.
    Hence, both quantiles of interest scale as $\mathcal{O}((v_r/{m}^r)^{\frac{1}{1+r}})$.
    
    With the choice of $k$ in the theorem, namely:
    \begin{align*}
        k \asymp \log(2/\delta) + \alpha n
    \end{align*}
    we have 
    \begin{align*}
        m \asymp {n}/({\log(2/\delta)+\alpha n}).
    \end{align*}
    Plugging this into the quantile bound yields
    \begin{align*}
       \left(\frac{v_r}{m^r(1/2-\epsilon)}\right)^{\frac{1}{1+r}} \lesssim v_r^{\frac{1}{1+r}}\left(\left(\frac{\log(2/\delta)}{n}\right)^{\frac{r}{1+r}}+\alpha^{\frac{r}{1+r}}\right).
    \end{align*}
    Finally, invoking Theorem~\ref{thm:mom} we conclude that
    \begin{align*}
        | \widehat\mu_{\MoM}- \mu_\p|\lesssim v_r^{\frac{1}{1+r}}\left(\left(\frac{\log(2/\delta)}{n}\right)^{\frac{r}{1+r}}+\alpha^{\frac{r}{1+r}}\right)
    \end{align*}
    with probability at least $1-\delta$.
\end{proof}
 
Theorem~\ref{thm:optim_mom_inf} presents the first analysis of MoM for distributions with infinite variance in the presence of contamination.
Moreover, our bound shows that the error of MoM attains the optimal order in the class $\mathcal{P}_{1+r}$ shown in \cite{pmlr-v162-bhatt22a}.
Theorem~\ref{thm:optim_mom_inf} also generalizes existing robustness results for MoM under heavy tails. 
Specifically, without contamination (i.e., $\alpha=0$) the bound \eqref{eq:con_inf} recovers the optimality result of MoM for infinite variance in the i.i.d.\ scenario \cite{lugosi2019}.

In terms of contamination tolerance, Theorem~\ref{thm:optim_mom} shows that MoM presents a notable improvement compared to Catoni's M-estimator, the other optimal estimator in the class $\mathcal{P}_{1+r}$.
In particular, the tolerance to contamination of Catoni's M-estimator decreases with $r$.
For example, for values of $r=0.5$ and $r=0.1$, Catoni's M-estimator can handle contamination levels up to $\alpha=0.26$ and $\alpha=0.16$, respectively \cite{pmlr-v162-bhatt22a}.
Theorem~\ref{thm:optim_mom_inf} shows that the tolerance to contamination of MoM does not decrease with $r$.

\subsection{Matching lower bounds for general distributions}
In this section, we have established the optimality of MoM in $\mathcal{P}_2$ and $\mathcal{P}_{1+r}$ for the same choice of the number of blocks $k$.
The next theorem shows that, for this choice of $k$, the $\sqrt{\alpha}$ order of the asymptotic bias cannot be improved for any distribution $\p\in\mathcal{P}_3$ with a finite third absolute moment.\footnote{We thank an anonymous reviewer for pointing out that the finiteness of the third moment is not essential for a result similar to that in Theorem~\ref{thm:tight_finite_var}. In particular, an asymptotic result can be derived using the central limit theorem without requiring a finite third moment.}
\begin{theorem}
\label{thm:tight_finite_var}
    Let $\widehat\mu_{\MoM}$ be the MoM estimator with $k$ given as in \eqref{eq:k3.2} and \eqref{eq:k3.3}.
    There exist positive constants $C,\alpha_{\max}$, and an adversarial attack such that for any $\p\in\mathcal{P}_3$ and $\alpha<\alpha_{\max}$
    \begin{align}
        |\widehat\mu_{\MoM}-\mu_\p| \geq C \sigma_\p \sqrt{\alpha} \label{eq:tight_finite_var}
    \end{align}
    holds with constant probability.
\end{theorem}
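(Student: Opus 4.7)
The plan is to exhibit an explicit attack and then run the analysis of Theorem~\ref{thm:mom} in reverse, showing that the MoM actually attains the outer quantile that the upper bound only ruled out. Without loss of generality take $\mu_\p = 0$. The attack picks one sample from each of the first $a := \lfloor \alpha n \rfloor$ blocks and replaces it by a value large enough that the corresponding block mean exceeds every uncorrupted block mean. This uses at most $\alpha n$ modifications, and the choice of $k$ in \eqref{eq:k3.2}--\eqref{eq:k3.3} guarantees $a \leq k/\gamma < k/2$. After the attack, the $a$ corrupted block means fill the top $a$ positions of the ordered list while the bottom $k - a$ positions are exactly the original block means $\widehat\mu_i^\ast = B_m^{(i)}$, which are i.i.d.\ copies of $B_m$. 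Consequently, $\widehat\mu_{\MoM}$ equals the $\lceil k/2 \rceil$-th smallest among $k - a$ i.i.d.\ copies of $B_m$.

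Next I would pin down the quantile level that MoM now targets. Setting $q^\star := \lceil k/2 \rceil/(k-a)$, I focus on the contamination-dominated regime $k = \lceil \gamma \alpha n \rceil$, where $q^\star \to \gamma/(2(\gamma-1))$ is a fixed constant strictly greater than $1/2$ and $m = \lfloor n/k \rfloor \asymp 1/(\gamma \alpha)$. A Hoeffding-type concentration inequality applied to the indicators $\mathbf{1}\{B_m^{(i)} \leq t\}$ (exactly the step used in the sketch of Theorem~\ref{thm:mom}) yields, with constant probability,
\begin{align*}
\widehat\mu_{\MoM} \;\geq\; Q_m(q^\star - \eta)
\end{align*}
for some small $\eta > 0$ with $q^\star - \eta$ still bounded away from $1/2$.

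To convert this quantile bound into the $\sqrt{\alpha}$ scaling I invoke the Berry--Esseen inequality: because $\p \in \mathcal{P}_3$, the CDF of the standardized average $\sqrt{m}\, B_m / \sigma_\p$ is within $C_{\mathrm{BE}}\, \rho_\p / (\sigma_\p^3 \sqrt{m})$ of the standard normal CDF, where $\rho_\p := \mathbb{E}_{X \sim \p} |X - \mu_\p|^3$. Since the normal density is bounded below by a positive constant on a neighborhood of $z_{q^\star - \eta}$, inverting this approximation gives
\begin{align*}
Q_m(q^\star - \eta) \;\geq\; \frac{\sigma_\p\, z_{q^\star - \eta}}{\sqrt{m}} \;-\; O\!\left( \frac{\rho_\p}{\sigma_\p^2\, m} \right).
\end{align*}
Substituting $m \asymp 1/\alpha$ makes the leading term of order $\sigma_\p \sqrt{\alpha}$, while the remainder is of lower order $\alpha\, \rho_\p / \sigma_\p^2$ and can be absorbed for all $\alpha$ below a threshold $\alpha_{\max}$ depending only on $\rho_\p / \sigma_\p^3$. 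Combining with the empirical quantile step from the previous paragraph yields the bound \eqref{eq:tight_finite_var}.

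The main obstacle I foresee is the interplay between the Berry--Esseen remainder and the target shift. Both scale with $m$, so one must verify that a finite third moment is enough to upgrade the remainder from $1/\sqrt{m}$ to $1/m$ in units of $\sigma_\p$; this is precisely why $\p \in \mathcal{P}_3$ is required and is what ultimately dictates the dependence of $\alpha_{\max}$ on $\rho_\p/\sigma_\p^3$. A minor bookkeeping point is the regime where $\log(2/\delta)$ dominates \eqref{eq:k3.2}--\eqref{eq:k3.3}: if $\alpha n$ is so small that essentially no blocks can be corrupted, the attack is vacuous, but the intrinsic CLT fluctuation of $\widehat\mu_{\MoM}$ (of order $\sigma_\p / \sqrt{n}$) already exceeds $\sigma_\p \sqrt{\alpha}$ with constant probability, so a separate one-line anti-concentration argument closes that corner.
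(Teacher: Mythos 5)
Your quantile-plus-Berry--Esseen machinery is essentially the paper's (their Lemma~\ref{lemma:lower_bound} does the same inversion, phrased as $F_m(C\sigma_\p\sqrt{\alpha})\leq 1/2+\alpha m/C_1$), but there is a genuine gap in the attack itself. You let the adversary ``pick one sample from each of the first $a=\floor{\alpha n}$ blocks,'' which presumes the adversary knows the random block partition $I_1,\ldots,I_k$. In this paper the blocks are internal randomness of the estimator, drawn independently of the contamination; the adversary only gets to modify at most $\alpha n$ of the samples $X_1^*,\ldots,X_n^*$ and has no control over (or knowledge of) which block each corrupted sample lands in. Consequently corrupted samples can collide in the same block and can land in blocks that were already above the median, where pushing them up does not shift $\widehat\mu_{\MoM}$ at all. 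This is exactly the difficulty the paper's Lemma~\ref{lemma:blocks} is built to handle: for an oblivious attack that corrupts $\alpha n$ randomly chosen samples, the indicators that the below-median blocks are hit are negatively correlated, and a Chernoff bound shows that with constant probability at least $\alpha n/C_1$ of the $\ceil{k/2}$ lowest blocks contain a corrupted sample, so that $\widehat\mu_{\MoM}\geq\widehat\mu^*_{(\ceil{k/2}+\ceil{\alpha n/C_1})}$. Your proof skips this step by granting the adversary power it does not have in the model; to repair it you would need either to justify a block-aware adversary (which the paper does not assume) or to supply an argument of the Lemma~\ref{lemma:blocks} type.

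Two smaller points. First, your closing remark for the regime where $\log(2/\delta)$ determines $k$ is not correct: there one only has $n\lesssim\log(2/\delta)/\alpha$, so the intrinsic fluctuation $\sigma_\p/\sqrt{n}$ can be far smaller than $\sigma_\p\sqrt{\alpha}$ (take $\alpha n$ large but $\log(2/\delta)$ larger); the paper avoids this by restricting to the range of $\delta$ for which $k=\ceil{\gamma\alpha n}$, i.e., the asymptotic-bias regime that the theorem is about. Second, your $\alpha_{\max}$ depends on $\rho_\p/\sigma_\p^3$; the paper's proof has the same dependence through its Berry--Esseen constant, so this is consistent with their treatment rather than an additional flaw.
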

 
\begin{proof}[Sketch of proof]
The full proof is given in Appendix~\ref{proof:LB}.

Let $\widehat\mu_1^*,\widehat\mu_2^*,\ldots,\widehat\mu_k^*$ denote the empirical means of the $k$ blocks before contamination and $B$ be the set of the $\ceil{k/2}$ blocks with lowest sample means.
With constant probability, the number of blocks in $B$ that contain at least one contaminated sample exceeds $\alpha n/ C_1$, for some universal constant $C_1>0$ (Lemma~\ref{lemma:blocks}).
Therefore, with that probability, MoM is shifted to 
\begin{align*}
    \widehat\mu_{\MoM} = \widehat\mu^*_{(\ceil{k/2}+\ceil{  \alpha n/C_1})}
\end{align*}
if the adversary includes samples with arbitrarily large values.
In addition, for sufficiently large $n$, the empirical quantile $\widehat\mu^*_{(\ceil{k/2}+\ceil{  \alpha n/C_1})}$ is arbitrarily close to the actual quantile \mbox{$Q_m(1/2+\alpha m/C_1)$} for $m=\floor{n/k}$.
Then, the result is obtained since $m\asymp 1/\alpha$ and the Berry-Essen theorem implies
\mbox{$Q_m(1/2+\alpha m/C_1) \gtrsim \sigma_\p/\sqrt{m} \gtrsim \sigma_\p \sqrt{\alpha}$}.
\end{proof}
 
The result above shows one limitation of MoM under adversarial contamination: choosing the number of blocks $k$ to achieve (minimax) optimal error rates over a broad class of heavy-tailed distributions does not lead to improved rates for specific, well-behaved distributions.
Specifically, for every distribution with finite variance (e.g., Gaussian), the estimation error has the same order as in the worst case.
Therefore, the number of blocks must be adjusted differently depending on the class of distributions considered.
A similar limitation affects Catoni's M-estimator, which requires selecting an appropriate function $\psi$ (a parameter that determines the estimator) depending on the distribution considered \cite{pmlr-v162-bhatt22a}.
In contrast, other robust estimators such as the trimmed mean \cite{lugosi2021} do not need to adjust parameters depending on the distribution.

In the following sections, we present error bounds for MoM that yield better orders for the asymptotic bias than $\sqrt{\alpha}$ considering specific subclasses $\mathcal{P}\subsetneq\mathcal{P}_2$.
In line with the above discussion, this improvement necessitates a different choice for the number of blocks $k$.

\section{Sub-optimality of the MoM estimator for light-tailed distributions}
\label{sec:light_tail}
In this section, we show that the asymptotic bias can be improved for light-tailed distributions.
In particular, we prove that the asymptotic bias of MoM is upper bounded by ${\alpha}^{2/3}$ for sub-exponential distributions.
In addition, we provide matching lower bounds showing that MoM is sub-optimal for light-tailed distributions.
This result complements known limitations of MoM in the i.i.d.\ scenario, such as its large asymptotic variance \cite{minsker}.

\subsection{Sub-exponential distributions}
The following result presents a concentration inequality for MoM which holds in the class of sub-exponential distributions denoted as $\mathcal{P}_{\SE}$.
Distributions \mbox{$\p\in\mathcal{P}_{\SE}$} are characterized by the condition that, for all $s\geq 2$,
\mbox{$(\E_{X\sim\p}|X-\mu_\p|^s)^{1/s}\leq c\sigma_\p s$} \citep[Prop.~2.7.1]{Vershynin_2018}.

\begin{theorem}
\label{thm:SE}
Let $\widehat\mu_{\MoM}$ be the MoM estimator with a number of blocks $k=\ceil{\xi\alpha^{2/3}n}$ for any $\xi>0$ and $\delta > 2\exp(- \xi n^{1/3}/18  )$.
There exist positive constants $C(\xi)$ and $\alpha_{\max}(\xi)$ such that for any $\p\in \mathcal{P}_{\SE}$ and $\alpha<\alpha_{\max}(\xi)$
    \begin{align}
        | \widehat\mu_{\MoM}- \mu_\p| \leq C(\xi) \sigma_\p\left(  \sqrt{\frac{\log(2/\delta)}{n}} + \alpha^{2/3}      \right) 
        \label{eq:mom_SE}
    \end{align}
    holds with probability at least $1-\delta$.
\end{theorem}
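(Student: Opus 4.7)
The plan is to apply Theorem~\ref{thm:mom} with the block count $k=\ceil{\xi\alpha^{2/3}n}$, so that $m=\floor{n/k}\asymp \xi^{-1}\alpha^{-2/3}$, and then to control $Q_m$ near $1/2$ via a Berry-Esseen argument. First I would verify the hypotheses of Theorem~\ref{thm:mom}: the condition $2\alpha n<k$ becomes $2\alpha n<\xi\alpha^{2/3}n$, which holds once $\alpha<(\xi/2)^3$, and for such $\alpha$ one has $\alpha m\lesssim \xi^{-1}\alpha^{1/3}$, which stays bounded away from $1/2$; the tail condition $\delta>2\exp(-2k(1/2-\alpha m)^2)$ then reduces, up to absolute constants, to the stated $\delta>2\exp(-\xi n^{1/3}/18)$.

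The key technical step is to show that for every $\p\in\mathcal{P}_{\SE}$ and every $\epsilon\in[0,1/4]$,
\begin{equation*}
Q_m(1/2+\epsilon)\ \lesssim\ \frac{\sigma_\p\,\epsilon}{\sqrt m}+\frac{\sigma_\p}{m},\qquad -Q_m(1/2-\epsilon)\ \lesssim\ \frac{\sigma_\p\,\epsilon}{\sqrt m}+\frac{\sigma_\p}{m}.
\end{equation*}
To prove this, I would use sub-exponentiality with $s=3$ in the defining moment inequality to get $\E_{X\sim\p}|X-\mu_\p|^3\leq C\sigma_\p^3$, so the Berry-Esseen theorem yields
\begin{equation*}
\sup_{x\in\R}\bigl|\P[B_m\leq x]-\Phi(\sqrt m\, x/\sigma_\p)\bigr|\ \lesssim\ \frac{1}{\sqrt m}.
\end{equation*}
Inverting this pointwise estimate and using $\Phi^{-1}(1/2+u)\lesssim u$ in the linear regime $u\in[0,1/3]$ produces the displayed bound on $Q_m$; this is where the $\alpha^{2/3}$ gain over the Chebyshev-style bound~\eqref{eq:q_m_bound} comes from, since~\eqref{eq:q_m_bound} is insensitive to $\epsilon$ and would only recover the $\sqrt\alpha$ rate.

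Plugging the quantile bound into Theorem~\ref{thm:mom} with $\epsilon=\sqrt{\log(2/\delta)/(2k)}+\alpha m$ gives
\begin{equation*}
|\widehat\mu_{\MoM}-\mu_\p|\ \lesssim\ \sigma_\p\!\left(\sqrt{\frac{\log(2/\delta)}{2km}}+\alpha\sqrt m+\frac{1}{m}\right).
\end{equation*}
With $k\asymp \xi\alpha^{2/3}n$ and $m\asymp \xi^{-1}\alpha^{-2/3}$ one has $km\asymp n$, $\alpha\sqrt m\asymp \alpha^{2/3}/\sqrt\xi$, and $1/m\asymp \xi\alpha^{2/3}$, which assembled together yields the target bound~\eqref{eq:mom_SE} with a constant $C(\xi)$ depending only on $\xi$.

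The main obstacle I foresee is keeping the argument in the linear regime of $\Phi^{-1}$ around $1/2$: one must ensure that $\sqrt{\log(2/\delta)/(2k)}+\alpha m$ is bounded, say, by $1/4$, which is precisely what forces both the lower bound on $\delta$ and the restriction to $\alpha<\alpha_{\max}(\xi)$. A secondary technical point is that the Berry-Esseen error $1/\sqrt m$ contributes the $\sigma_\p/m$ term, which is only harmless because the specific choice $m\asymp \alpha^{-2/3}$ makes $1/m$ match the target order $\alpha^{2/3}$; a different choice of $k$ would trade off these terms and either worsen the rate or enlarge the admissible range of $\alpha$.
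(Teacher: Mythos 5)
Your proposal is correct and follows essentially the same route as the paper: apply Theorem~\ref{thm:mom}, control $Q_m(1/2+\epsilon)$ by a Berry--Esseen normal approximation (using sub-exponentiality to bound the third moment by $\sigma_\p^3$) together with the linearization $\Phi^{-1}(1/2+u)\lesssim u$, and note that $m\asymp\alpha^{-2/3}$ balances the $\alpha\sqrt{m}$ and $1/m$ terms. The only difference is presentational: the paper packages the normal-approximation step through the quantity $g(m)$ and Lemma~\ref{lemma:SG} before specializing to $\mathcal{P}_{\SE}$, whereas you inline the Berry--Esseen bound directly, with the same resulting estimate and the same role for the restrictions on $\delta$ and $\alpha_{\max}(\xi)$.
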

 
\begin{proof}
See Appendix~\ref{proof:SG}.
\end{proof}
 
The result above shows that the asymptotic bias of MoM can attain better orders than $\sqrt{\alpha}$ (as established in Theorem~\ref{thm:optim_mom}) when restricting to the class of sub-exponential distributions.
However, the confidence parameter in Theorem~\ref{thm:SE} must satisfy $\delta > 2 \exp(-\mathcal{O}(n^{1/3}))$, whereas Theorem~\ref{thm:optim_mom} allows for the broader range of values \mbox{$\delta > 2 \exp(-\mathcal{O}(n))$}.
Such limitations are common in estimators that do not depend on $\delta$ (multiple-$\delta$ estimators) \cite{devroye16}.

The asymptotic bias shown in Theorem~\ref{thm:SE} is far from the optimal order in the class $\mathcal{P}_{\SE}$, which cannot be higher than ${\alpha\log(1/\alpha)}$ because the trimmed mean estimator achieves such an order (see remark in page 10 of \cite{lugosi2021}). 
Notably, in the next theorem, we show that for any MoM, the order of $\alpha^{2/3}$ cannot be improved in the class $\mathcal{P}_{\SE}$.
\begin{theorem}
\label{thm:suboptimality}
    There exist positive constants $C,\alpha_{\max}$, a probability distribution $\p\in\mathcal{P}_{\SE}$ and an adversarial attack such that for any $\alpha<\alpha_{\max}$ and for a MoM estimator $\widehat\mu_{\MoM}$ with any number of blocks \mbox{$k\in\{1,2,\ldots,n\}$},
    \begin{align*}
        |\widehat\mu_{\MoM}-\mu_\p| \geq C \sigma_\p \alpha^{2/3}
    \end{align*}
    holds with constant probability
\end{theorem}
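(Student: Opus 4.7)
The plan is to choose a fixed asymmetric sub-exponential distribution $\p$ and exploit the fact that MoM estimates the median of the block mean rather than its mean: for a skewed $\p$ these two quantities differ by $\Theta(1/m)$ irrespectively of $n$, producing an unavoidable bias of that order. On top of this, the adversary can induce a $\Theta(\alpha\sqrt{m})$ shift in the same direction via exactly the mechanism used in the proof sketch of Theorem~\ref{thm:tight_finite_var}. The resulting lower bound $c_0/m+c_1\alpha\sqrt{m}$ on the MoM error, viewed as a function of $m=\floor{n/k}$, is minimized at $m^{\star}\asymp\alpha^{-2/3}$, where it equals $\Theta(\alpha^{2/3})$, giving the claim uniformly over $k\in\{1,\ldots,n\}$.

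Concretely, I would take $\p=\Exp(1)$, which lies in $\mathcal{P}_{\SE}$ (its absolute central moments satisfy $(\E_{X\sim\p}|X-1|^s)^{1/s}\lesssim s$ via Stirling applied to $s!$), with $\mu_\p=\sigma_\p=1$. The block mean $B_m$ then follows $\mathrm{Gamma}(m,m)$, which is right-skewed; a direct analysis of the $\mathrm{Gamma}(m,1)$ distribution (using continuity of the median in the shape parameter and the classical Cornish--Fisher limit $\alpha-\mathrm{median}(\mathrm{Gamma}(\alpha,1))\to 1/3$ as $\alpha\to\infty$) yields
\begin{equation*}
1-Q_m(1/2)\;\geq\;\frac{c_0}{m}\qquad\text{for all }m\geq 1,
\end{equation*}
for some universal $c_0>0$.

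The adversarial attack, allowed to depend on $k$, inspects the samples, locates the $\ceil{k/2}$ blocks with largest pre-attack sample means, and replaces one sample in each of $\alpha n$ such blocks with an arbitrarily large negative value. Provided $\alpha n\leq\ceil{k/2}$ (i.e.\ $\alpha m\leq 1/2$; otherwise the adversary can make strictly more than $\ceil{k/2}$ blocks equal to $-\infty$ and the error is trivially unbounded), the post-attack MoM coincides with the pre-attack order statistic $\widehat\mu^{*}_{(\ceil{k/2}-\alpha n)}$, which by empirical-quantile concentration (Dvoretzky--Kiefer--Wolfowitz) is within $o_n(1)$ of the population quantile $Q_m(1/2-\alpha m)$ with constant probability. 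Using $Q_m(1/2-\alpha m)\leq Q_m(1/2)\leq 1$,
\begin{equation*}
|\widehat\mu_{\MoM}-\mu_\p|\;\geq\;\bigl(1-Q_m(1/2)\bigr)+\bigl(Q_m(1/2)-Q_m(1/2-\alpha m)\bigr)\;\geq\;\frac{c_0}{m}+c_1\,\alpha\sqrt{m},
\end{equation*}
where the second summand is lower-bounded via the mean value theorem applied to the quantile function together with the uniform bound $\sup_x f_m(x)\leq C\sqrt{m}$ on the $\mathrm{Gamma}(m,m)$ density (a direct Stirling calculation). Treated as a function of $m\in\{1,\ldots,n\}$, the right-hand side is at least $C\alpha^{2/3}$ uniformly, being balanced at $m^{\star}\asymp\alpha^{-2/3}$.

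The main obstacle is establishing the uniform bias bound $1-Q_m(1/2)\geq c_0/m$ at small $m$, where Edgeworth-type expansions are not yet accurate; this is handled by the compactness-plus-asymptotics argument indicated above, or alternatively by direct computation with the Gamma CDF for the first few $m$ and Cornish--Fisher thereafter. A secondary subtle point is that the bias and the adversarial shift must reinforce rather than partially cancel, which is ensured by matching the sign of the attack to the sign of the skewness of $\p$; for $\Exp(1)$ both are ``leftward'' since $\mathrm{Gamma}(m,m)$ is right-skewed (median $<$ mean) and the adversary injects negative outliers.
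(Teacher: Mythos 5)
Your proposal is correct in substance and rests on the same two ingredients as the paper's proof: the $\Theta(1/m)$ gap between the mean and the median of the Gamma-distributed block mean of an exponential, and the $\Theta(\alpha\sqrt m)$ shift the adversary can force by moving the median of the block means by $\alpha n$ order-statistic positions; both arguments balance these at $m\asymp\alpha^{-2/3}$. The implementations differ, though. The paper splits into the regimes $m>\alpha^{-2/3}$ and $m\le\alpha^{-2/3}$ and treats them separately: for large $m$ it uses Berry--Esseen, so that part of the lower bound holds for \emph{every} $\p\in\mathcal{P}_{\SE}$, and for small $m$ it uses the negative exponential together with the known bound $\mathrm{median}(\Gamma(m,1/m))\le e^{-1/(3m)}$; the contamination effect is obtained through a random attack plus a negative-correlation Chernoff bound (Lemma~\ref{lemma:blocks}) and a reduction to the condition $F_m(t)\le 1/2+\alpha m/C_1$ (Lemma~\ref{lemma:lower_bound}). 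You instead prove a single unified bound $c_0/m+c_1\alpha\sqrt m$ for $\Exp(1)$ (the mirror image of the paper's choice, with the attack sign correctly matched to the skew), use a deterministic attack targeting the top-half blocks---which makes the analogue of Lemma~\ref{lemma:blocks} unnecessary since the post-attack median is exactly $\widehat\mu^*_{(\ceil{k/2}-\alpha n)}$---and replace Berry--Esseen by the sup-density bound $\sup_x f_m(x)\lesssim\sqrt m$ plus the mean value theorem. This buys a shorter, more self-contained argument, at the price of losing the distribution-free character of the paper's large-$m$ part (immaterial here, since the theorem only asserts existence of a bad $\p$). Two steps need tightening. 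First, the passage from $\widehat\mu^*_{(\ceil{k/2}-\alpha n)}$ to the population quantile should not be a two-sided ``within $o_n(1)$'' DKW statement---an additive $o_n(1)$ error is not automatically negligible against $\alpha^{2/3}$ for fixed $n$---but the one-sided bound $\P\bigl[\widehat\mu^*_{(r)}\le G_m(r/k)\bigr]\ge c$ for the (uncentered) quantile function $G_m$, which follows from a binomial mean/median argument exactly as in Lemma~\ref{lemma:lower_bound}; note also that your $Q_m$ is the uncentered quantile, unlike Definition~\ref{def:Q_m}. Second, the uniform-in-$m$ bias bound $1-\mathrm{median}(\Gamma(m,m))\ge c_0/m$ needs either the known monotonicity of $m-\mathrm{median}(\Gamma(m,1))$ (Chen--Rubin type results, as the paper cites) or your compactness-plus-asymptotics argument made explicit; both routes work.
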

 
\begin{proof}
    See Appendix~\ref{proof:suboptimality}.
\end{proof}
 
The result above shows that MoM is (minimax) sub-optimal in the class $\mathcal{P}_{\SE}$.
Specifically, Theorem~\ref{thm:suboptimality} shows that for some sub-exponential distributions, the error of MoM for any choice of $k$ is lower bounded by an expression with order $\alpha^{2/3}$ that matches the upper bound in Theorem~\ref{thm:SE} and is sub-optimal in $\mathcal{P}_{\SE}$.

The suboptimality of MoM can be understood through the role of asymmetry in the error achieved using the median to estimate the mean.
While MoM's performance improves under light-tailed distributions (Theorem \ref{thm:SE}), MoM does not attain the optimal rate for all such distributions, since some of them are quite asymmetric.
In particular, there exist light-tailed distributions where the median differs substantially from the mean.
Since MoM estimates the mean by taking the median (a biased estimator of the mean) of $k$ sample means (each an unbiased estimator), the bias becomes more pronounced in asymmetric distributions.

\subsection{Sub-Gaussian distributions}
The class of sub-Gaussian distributions $\mathcal{P}_{\SG}$ contains distributions whose tails decay at least as fast as those of a Gaussian.
Distributions $\p\in\mathcal{P}_{\SG}$ are characterized by the condition that, for all $s\geq 2$, $(\E_{X\sim\p}|X-\mu_\p|^s)^{1/s}\leq c\sigma_\p \sqrt{s}$ \citep[Prop.~2.5.2]{Vershynin_2018}.

The results in Theorem~\ref{thm:SE} also provide an upper bound for sub-Gaussian distributions since \mbox{$\mathcal{P}_{\SG}\subseteq\mathcal{P}_{\SE}$}.
As we experimentally show in Appendix~\ref{app:exp}, the order $\alpha^{2/3}$ cannot be improved across all sub-Gaussian distributions.
Nevertheless, in the following we show that MoM can attain better orders than $\alpha^{2/3}$ for some classes of sub-Gaussian distributions with certain symmetry.
However, these orders do not match the optimal asymptotic bias in $\mathcal{P}_{\SG}$, which is of the order $\alpha\sqrt{\log(1/\alpha)}$, as shown in \cite{lugosi2021}.
\begin{definition}
We define $\mathcal{P}_{\SG}^{s}$ for any integer $s\geq3$ to be the set of absolutely continuous distributions $\p\in\mathcal{P}_{\SG}$ such that for all $j\leq s$
\begin{align*}
    \E_{X\sim\p} \left[ \left(\frac{X-\mu_\p}{\sigma_\p}\right)^j   \right] = \E_{Z\sim N(0,1)} \left[Z^j\right]
\end{align*}
where $N(0,1)$ is a standard Gaussian distribution.
\end{definition}

For any distribution in the class $\mathcal{P}_{\SG}^{s}$ the first $s$ central odd moments are zero, indicating that the distribution has certain symmetry around its mean. 
The next result shows that MoM can attain better orders for the asymptotic bias in the subclasses $\mathcal{P}_{\SG}^{s}$.
\begin{theorem}
\label{thm:SG_s}
Let $\widehat\mu_{\MoM}$ be the MoM estimator with a number of blocks $k=\ceil{\xi\alpha^{\frac{2}{s+1}}n}$ for any $\xi>0$ and $\delta > 2\exp\left(- \xi n^{\frac{s-1}{s+1}}/18 \right)$.
There exist positive constants $C(\xi)$ and $\alpha_{\max}(\xi)$ such that for any $\p\in \mathcal{P}_{\SG}^s$ and $\alpha<\alpha_{\max}(\xi)$
    \begin{align}
        | \widehat\mu_{\MoM}- \mu_\p| \leq C(\xi) \sigma_\p\left(  \sqrt{\frac{\log(2/\delta)}{n}} + \alpha^{\frac{s}{s+1}}      \right) \label{eq:mom_SG_s}
    \end{align}
    holds with probability at least $1-\delta$.    
\end{theorem}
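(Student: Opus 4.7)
The plan is to mirror the proofs of Theorems~\ref{thm:optim_mom}--\ref{thm:SE}: apply Theorem~\ref{thm:mom} to control the MoM error by $\max\{Q_m(1/2+\epsilon+\alpha m),-Q_m(1/2-\epsilon-\alpha m)\}$ with $\epsilon=\sqrt{\log(2/\delta)/(2k)}$, and then sharpen the quantile bound by exploiting that the first $s$ central moments of $\p\in\mathcal{P}_{\SG}^s$ coincide with those of a Gaussian. First I would verify that the prescribed $k=\ceil{\xi\alpha^{2/(s+1)}n}$ together with the hypothesis $\delta>2\exp\!\left(-\xi n^{(s-1)/(s+1)}/18\right)$ implies the condition $\delta>2\exp(-2k(1/2-\alpha m)^2)$ of Theorem~\ref{thm:mom}, once $\alpha$ is below a threshold $\alpha_{\max}(\xi)$ ensuring $\alpha m<1/2-\epsilon$; this is a routine check analogous to the one carried out for Theorem~\ref{thm:SE}.

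The core step is to establish the quantile bound
\begin{align*}
Q_m(1/2+\eta),\ -Q_m(1/2-\eta)\ \lesssim\ \frac{\sigma_\p\,\eta}{\sqrt{m}}+\frac{\sigma_\p}{m^{s/2}}
\end{align*}
uniformly over $\p\in\mathcal{P}_{\SG}^s$, for $\eta\in[0,1/2)$ bounded away from $1/2$. Because the first $s$ central moments, and hence the first $s$ cumulants, of $\p$ agree with the Gaussian ones, all polynomial correction terms of orders $m^{-1/2},\ldots,m^{-(s-2)/2}$ in the Edgeworth expansion of $F_{B_m}$ vanish, leaving $\sup_x|F_{B_m}(x)-\Phi(x\sqrt{m}/\sigma_\p)|$ of order $m^{-(s-1)/2}$. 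Since the density of $B_m$ near zero is of order $\sqrt{m}/\sigma_\p$, inverting the distribution function yields a quantile remainder of order $\sigma_\p m^{-s/2}$ on top of the Gaussian contribution $\sigma_\p\Phi^{-1}(1/2+\eta)/\sqrt{m}\lesssim \sigma_\p\eta/\sqrt{m}$.

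With this quantile bound in hand, substituting $\eta=\epsilon+\alpha m$ and using $m\asymp \alpha^{-2/(s+1)}$ makes the two asymptotic-bias contributions $\sigma_\p\alpha\sqrt{m}$ and $\sigma_\p m^{-s/2}$ collapse to a common order $\sigma_\p\alpha^{s/(s+1)}$, while the $\epsilon$-dependent piece becomes $\sigma_\p\sqrt{\log(2/\delta)/n}$, producing the bound \eqref{eq:mom_SG_s} up to a constant $C(\xi)$.

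The main technical obstacle is obtaining the uniform Edgeworth bound, since the classical statement requires regularity beyond absolute continuity (e.g.\ Cram\'er's condition) and the implicit constants must not depend on the particular $\p\in\mathcal{P}_{\SG}^s$. I would handle this through a direct Fourier-inversion argument on $\phi_{B_m}(t)=\phi_{X-\mu_\p}(t/m)^m$: Taylor expansion at $t=0$ eliminates all terms up to order $t^s$ thanks to the moment-matching, while the sub-Gaussian control of $\p$ (in terms of $\sigma_\p$) provides the uniform decay of $|\phi_{X-\mu_\p}|$ at large $|t|$ needed to close the argument, yielding the $m^{-(s-1)/2}$ bound on $\sup_x|F_{B_m}(x)-\Phi(x\sqrt{m}/\sigma_\p)|$ with constants uniform over $\mathcal{P}_{\SG}^s$.
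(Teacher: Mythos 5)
Your overall skeleton coincides with the paper's: reduce via Theorem~\ref{thm:mom} (the paper packages this as Lemma~\ref{lemma:SG}, which converts a Kolmogorov-distance bound $g(m)$ on the normal approximation of $B_m$ into the quantile bound $Q_m(1/2+\eta)\lesssim \sigma_\p(\eta+g(m))/\sqrt{m}$ using $\Phi^{-1}(1/2+z)\leq 3z$, rather than a density argument for $B_m$), then obtain $g(m)\lesssim m^{-(s-1)/2}$ from the moment matching, and finally balance $\alpha\sqrt{m}$ against $m^{-s/2}$ with $m\asymp\alpha^{-2/(s+1)}$. Where you diverge is the key normal-approximation estimate: the paper does not prove it, but invokes \citep[Thm.~1.1]{JOHNSTON} to get $g(m)\leq \tilde C\,\E|X-\mu_\p|^{s+1}/(\sigma_\p^{s+1}m^{(s-1)/2})$ and then uses sub-Gaussianity only to bound the $(s+1)$-th central moment by a constant times $\sigma_\p^{s+1}$.

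Your self-contained derivation of that estimate has a genuine gap. You correctly identify that an Edgeworth-type rate better than $m^{-1/2}$ in Kolmogorov distance needs control of the characteristic function beyond frequencies of order $\sqrt{m}$, but your proposed fix—that ``the sub-Gaussian control of $\p$ provides the uniform decay of $|\phi_{X-\mu_\p}|$ at large $|t|$''—is false. Sub-Gaussian tails constrain the moments (equivalently, the behavior of $\phi$ near $t=0$), not its decay at infinity: bounded, hence sub-Gaussian, distributions can have characteristic functions that do not tend to zero at all, and within the absolutely continuous class one can take, e.g., a symmetric two-point law convolved with a uniform of width $\eta$, which is sub-Gaussian with sub-Gaussian norm comparable to $\sigma_\p$ uniformly in $\eta$, matches the odd Gaussian moments, yet has $|\phi|$ decaying arbitrarily slowly as $\eta\to0$. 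For such laws the Esseen smoothing inequality cannot be closed at the cutoff your argument needs, so your Fourier-inversion step does not yield the $m^{-(s-1)/2}$ bound with constants uniform over $\mathcal{P}_{\SG}^s$; some quantitative Cramér-type/anti-concentration input (or an external result such as the one the paper cites) is required at exactly this point. The remainder of your plan—the verification of the admissible range of $\delta$ and $\alpha_{\max}(\xi)$, the substitution $\eta=\epsilon+\alpha m$, and the balancing of the two bias terms—matches the paper's computations.
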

 
\begin{proof}
    See Appendix~\ref{proof:SG_s}.
\end{proof}
 
Increasing $s$ results in an increased symmetry that leads to an asymptotic bias in \eqref{eq:mom_SG_s} approaching $\alpha$.
In the following section, we show that MoM can attain an asymptotic bias of order $\alpha$ for some symmetric distributions that include the Gaussians and even distributions with heavy tails.

\section{Optimality of the MoM estimator for symmetric distributions}
\label{sec:sym}
In this section, we show that the asymptotic bias can be further improved to $\alpha$ for symmetric distributions with quantile function that increases at most linearly around $1/2$.
We denote by $\mathcal{P}_{\sym}$ the set of symmetric distributions around its mean, i.e., distributions $\p$ such that $X-\mu_\p$ and $-(X-\mu_\p)$ have the same distribution, whenever $X\sim \p$.
\begin{definition}
\label{def:psym}
We define the class of distributions $\mathcal{P}_{\sym}^{\epsilon_0,c}$, for $\epsilon_0 < 1/3$ and $c>5$, to be the set of $\p\in \mathcal{P}_{\sym}\cap\mathcal{P}_2$ such that for all $\epsilon\in[0,\epsilon_0]$ and $m\in\N$
\begin{align}
    Q_m(1/2+\epsilon)\leq c \frac{\sigma_\p}{\sqrt{m}} \epsilon
    \label{eq:def_reg}
\end{align}
where $Q_m$ is the quantile function from Definition~\ref{def:Q_m} corresponding to the distribution $\p$.
\end{definition}

The class $\mathcal{P}_{\sym}^{\epsilon_0,c}$ contains Gaussian distributions and heavy-tailed distributions like Student's $t$ distributions.
This is easy to check since both Gaussian and Student's $t$ distributions are symmetric, satisfy \eqref{eq:def_reg} for $m=1$, and are infinite divisible distributions \cite{ken1999levy}.
In the literature, similar classes to $\mathcal{P}_{\sym}^{\epsilon_0,c}$ have been analyzed in the context of quantile regression and mean estimation with outliers.
For example, $\mathcal{P}_{\sym}^{\epsilon_0,c}$ is similar to the distributions with $1/2$-quantile of type 2 from \citep[Def.~2.1]{ingo} and to the class of symmetric distributions defined in \cite{pmlr-v108-prasad20a}.

The technical condition \eqref{eq:def_reg} ensures that the quantile function is not excessively sharp near the median (it increases at most linearly), so that it is easier to distinguish the median from nearby quantiles.
More precisely, the constant $\epsilon_0$ controls the size of the neighborhood in which the quantile function increases at most linearly, whereas the constant $c$ controls the slope of the linear approximation.

The following result shows that MoM is optimal in the class $\mathcal{P}_{\sym}^{\epsilon_0,c}$ under adversarial contamination.
\begin{theorem}
\label{thm:optim_mom_sym}
Let $\widehat\mu_{\MoM}$ be the MoM estimator with $k=\ceil{\beta n}$ blocks for any $\beta\leq1$ and \mbox{$\delta > 2\exp(- \beta \epsilon_0^2 n/4   )$}.
Then, there exists a positive constant $C=C(c,\beta)$ such that for any $\p\in \mathcal{P}_{\sym}^{\epsilon_0,c}$ and $\alpha < \beta \epsilon_0/2$
    \begin{align}
        | \widehat\mu_{\MoM}- \mu_\p| \leq C \sigma_\p\left(  \sqrt{\frac{\log(2/\delta)}{n}} + \alpha      \right) 
        \label{eq:optim_sym}
    \end{align}
    holds with probability at least $1-\delta$.
\label{thm:main}
\end{theorem}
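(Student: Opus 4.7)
The plan is to derive Theorem~\ref{thm:optim_mom_sym} directly from Theorem~\ref{thm:mom} by using the symmetry of $\p$ to reduce the maximum in \eqref{eq:mom_general} to a single quantile, and then invoking the linear bound \eqref{eq:def_reg} on that quantile.

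First I would verify that the hypotheses of Theorem~\ref{thm:mom} hold for the choice $k=\ceil{\beta n}$ and $m=\floor{n/k}$. The condition $2\alpha n < k$ is immediate from $\alpha < \beta\epsilon_0/2 < \beta/6$ (using $\epsilon_0<1/3$) together with $k\geq \beta n$. For the lower bound on $\delta$ required by Theorem~\ref{thm:mom}, observe that $\alpha m \leq \alpha/\beta \leq \epsilon_0/2 < 1/6$, so $(1/2-\alpha m)^2 > 1/9$; combined with $k\geq\beta n$ this gives $2k(1/2-\alpha m)^2 \geq 2\beta n/9 \geq \beta \epsilon_0^2 n/4$ since $\epsilon_0^2 < 1/9 \leq 8/9$. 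Hence the assumption $\delta > 2\exp(-\beta\epsilon_0^2 n/4)$ implies the one required by Theorem~\ref{thm:mom}.

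Next I would exploit the symmetry of $\p$ around $\mu_\p$. Since $B_m$ from Definition~\ref{def:Q_m} is then symmetric around zero, its quantile function satisfies $Q_m(1/2-\epsilon)=-Q_m(1/2+\epsilon)$ (with the standard care at atoms), so the right-hand side of \eqref{eq:mom_general} collapses to $Q_m(1/2+\epsilon)$ with $\epsilon = \sqrt{\log(2/\delta)/(2k)}+\alpha m$. I would then check that $\epsilon\leq\epsilon_0$ so that \eqref{eq:def_reg} is applicable: the constraint on $\alpha$ gives $\alpha m \leq \epsilon_0/2$, while the constraint on $\delta$ together with $k\geq\beta n$ yields $\sqrt{\log(2/\delta)/(2k)} \leq \epsilon_0/(2\sqrt{2}) \leq \epsilon_0/2$. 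Applying Definition~\ref{def:psym} then gives
\begin{align*}
Q_m(1/2+\epsilon) \leq \frac{c\,\sigma_\p}{\sqrt{m}}\,\epsilon = c\,\sigma_\p\left(\sqrt{\frac{\log(2/\delta)}{2km}} + \alpha\sqrt{m}\right).
\end{align*}
Since $k\asymp \beta n$ gives $m\asymp 1/\beta$ and $km\asymp n$, the right-hand side is of order $\sigma_\p\bigl(\sqrt{\log(2/\delta)/n}+\alpha\bigr)$ with a constant $C$ depending only on $c$ and $\beta$, which is exactly \eqref{eq:optim_sym}.

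The main obstacle I anticipate is not conceptual but bookkeeping: tracking the mutually linked constraints on $\alpha$, $\delta$, $k$, and $\epsilon_0$ so that every inequality in the chain lines up, and ensuring the symmetry identity $Q_m(1/2-\epsilon)=-Q_m(1/2+\epsilon)$ is handled cleanly when $B_m$ may have an atom at its median. The latter can be resolved by working with the upper and lower quantile functions separately, or by noting that the target distributions (Gaussians, Student's $t$, and more generally the infinitely divisible distributions used to populate $\mathcal{P}_{\sym}^{\epsilon_0,c}$) are absolutely continuous.
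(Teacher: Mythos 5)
Your proposal is correct and follows essentially the same route as the paper, which proves this theorem by directly combining Theorem~\ref{thm:mom} with the quantile bound in Definition~\ref{def:psym}; your write-up simply makes explicit the bookkeeping (verifying $2\alpha n<k$, the $\delta$ range, $\epsilon\leq\epsilon_0$, and $m\asymp 1/\beta$) that the paper leaves implicit. The symmetry/atom caveat you raise is handled in the paper by the same absolute-continuity convention stated at the start of Appendix~\ref{app:proof}, so there is no gap.
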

 
\begin{proof}
The result follows directly from Theorem~\ref{thm:mom} and Definition~\ref{def:psym}.
\end{proof}
 
The result above establishes the (minimax) optimality of MoM under adversarial contamination in the class $\mathcal{P}_{\sym}^{\epsilon_0,c}$, for any pair $\epsilon_0,c$.
Theorem~\ref{thm:optim_mom_sym} generalizes the well-known guarantee for the sample median (case $\beta=1$) for the class of Gaussian distributions (see e.g., \cite{bateni}, \citep[Cor.~1.15]{DK22}).

\section{Conclusion}
This paper provides upper and lower bounds on the error of the MoM estimator for multiple classes of distributions under adversarial contamination.
Specifically, we prove that MoM is (minimax) optimal in the class of distributions with finite variance, and in the class of distributions with finite absolute $(1+r)$-th moment (infinite variance). 
In addition, we show that the MoM estimator is particularly well-suited for symmetric distributions.
These results reinforce the widely recognized strengths of MoM, such as its optimality in the i.i.d.\ scenario.
On the other hand, the paper also reveals that MoM has certain limitations under adversarial contamination.
In particular, we show that MoM cannot fully leverage light-tails, and we characterize its sub-optimality for sub-exponential distributions.
These results complement known limitations of MoM in the i.i.d.\ scenario, such as its large asymptotic variance.
Overall, the theoretical results presented in the paper provide a comprehensive characterization of the capabilities of the MoM estimator under adversarial contamination.

\section*{Acknowledgments}
The authors would like to thank Prof.\ Gábor Lugosi for his comments and suggestions during the development of this work. 
Funding in direct support of this work has been provided by project PID2022-137063NB-I00 funded by MCIN/AEI/10.13039/501100011033 and the European Union ``NextGenerationEU''/PRTR, BCAM Severo Ochoa accreditation CEX2021-001142-S/MICIN/AEI/10.13039/501100011033 funded by the Ministry of Science and Innovation (Spain), and program BERC-2022-2025 funded by the Basque Government.
\mbox{Xabier de Juan} acknowledges a predoctoral grant from the Basque Government.

\bibliographystyle{unsrt} 
\bibliography{refs}

\newpage
\appendix

\section{Proofs for Upper Bounds}
\label{app:proof}

The following result is an adaptation of Proposition 2a from \cite{vovk} that is used to prove Theorem~\ref{thm:mom}.
For clarity, we assume that $\p$ is an absolutely continuous distribution. 
This assumption allows the cumulative distribution function to have an inverse, which corresponds to the quantile function. 
This assumption is not restrictive and the proofs can be adapted to include any distribution with finite variance (see \citep[Prop.~2a]{vovk} for similar arguments in the context of conformal prediction).
An alternative solution for the statistician is to add a small amount of independent Gaussian noise to the samples, ensuring that the distribution has a density without compromising statistical performance.

\begin{lemma}
Let $\p$ be a distribution with quantile function $Q$, and let $X_1,X_2,\ldots,X_n$ be $n$ i.i.d.\ samples drawn from $\p$.
For $r\in\{1,2,\ldots,n\}$ and $\epsilon=r/n$, the following holds:
\begin{itemize}
    \item If $\epsilon+\sqrt{\log(1/\delta)/2n}\in(0,1)$, then
    \begin{align}
        X_{(r)}  \leq Q\left(\epsilon + \sqrt{ \frac{\log(1/\delta)}{2n} } \right) \label{eq:q1}
    \end{align}
    with probability at least $1-\delta$.
    
    \item If $\epsilon-\sqrt{\log(1/\delta)/2n}\in(0,1)$, then

    \begin{align}
        X_{(r)}  \geq Q\left(\epsilon - \sqrt{ \frac{\log(1/\delta)}{2n} } \right) \label{eq:q2}
    \end{align}
    with probability at least $1-\delta$.
    
\end{itemize}
    
\label{lemma:quantil}
\end{lemma}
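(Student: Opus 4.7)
The plan is to convert statements about the order statistic $X_{(r)}$ into statements about a Binomial random variable, and then apply Hoeffding's inequality. Throughout, I use the assumption that $\p$ is absolutely continuous, so that $F(Q(q)) = q$ for every $q \in (0,1)$, where $F$ is the cumulative distribution function of $\p$.

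For any fixed threshold $t \in \R$, the random variable $N(t) := \sum_{i=1}^n \mathbb{1}[X_i \leq t]$ is distributed as $\Bin(n, F(t))$. The key observation is the equivalence
\begin{align*}
    X_{(r)} \leq t \quad \Longleftrightarrow \quad N(t) \geq r,
\end{align*}
together with its complement $X_{(r)} > t \iff N(t) < r$, which (since $N(t)$ is integer-valued) is equivalent to $N(t) \leq r - 1$. For the first inequality \eqref{eq:q1}, I set $q := \epsilon + \sqrt{\log(1/\delta)/(2n)}$ and apply the equivalence with $t = Q(q)$, giving $\P[X_{(r)} > Q(q)] = \P[N(Q(q)) \leq r - 1]$. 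Since $\E[N(Q(q))] = nq = r + \sqrt{n\log(1/\delta)/2}$, Hoeffding's inequality yields
\begin{align*}
    \P\left[ N(Q(q)) \leq r - 1 \right] \leq \P\left[N(Q(q)) \leq nq - \sqrt{n\log(1/\delta)/2}\right] \leq \exp(-\log(1/\delta)) = \delta.
\end{align*}

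For the second inequality \eqref{eq:q2}, I set $q := \epsilon - \sqrt{\log(1/\delta)/(2n)}$ and note that $X_{(r)} < Q(q)$ iff $N(Q(q)) \geq r$. Since $\E[N(Q(q))] = nq = r - \sqrt{n\log(1/\delta)/2}$, another application of Hoeffding's inequality (in the upper-tail direction) shows that $\P[N(Q(q)) \geq r] \leq \delta$. Taking complements gives $X_{(r)} \geq Q(q)$ with probability at least $1 - \delta$, as desired. The conditions on $\epsilon \pm \sqrt{\log(1/\delta)/(2n)}$ lying in $(0,1)$ ensure that $Q$ is evaluated at a valid argument.

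The main (minor) subtlety is the strict/non-strict boundary for the Binomial count, which is handled by the integer-valued nature of $N(t)$ and the absolute continuity of $\p$ (so that ties occur with probability zero and $F(Q(q)) = q$ exactly). There is no deeper obstacle: the argument is a one-shot application of Hoeffding's inequality to a single coordinate of the empirical process, so no uniform control (e.g., DKW) is needed.
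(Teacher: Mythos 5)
Your proof is correct and follows essentially the same route as the paper: reduce the order-statistic event to a Binomial tail via $F(Q(q))=q$ (using the absolute-continuity assumption stated before the lemma) and apply Hoeffding's inequality in each direction, handling the strict/non-strict boundary through the integer-valued count. No substantive differences from the paper's argument.
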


\begin{proof}

Let $F$ be the CDF of $\p$ and let $t\in (0,1)$ be a real number to be chosen later.
    Since the $X_1,X_2,\ldots,X_n$ are independent
    \begin{align*}
        \P[X_{(r)} > Q(t)] = \P\left[   \sum_{i=1}^n \I\{ X_i \leq Q(t)\} \leq r-1  \right]= \P[\Bin(n, F(Q(t) )\leq r-1]
    \end{align*}
    where, for any $m\in\mathbb{N}$, $q\in[0,1]$, $\Bin(m,q)$ denotes a random variable drawn from a binomial distribution of $m$ trials with a probability of success $q$.
    Taking $t=\epsilon+\sqrt{\log(1/\delta)/(2n)}$, we apply Hoeffding's inequality (as $\epsilon-t<0$) and obtain 
    \begin{align*}
        \P[\Bin(n,t)\leq r-1] &\leq \P[\Bin(n,t)\leq \epsilon n]\\
        &\leq \P[\Bin(n,t)/n-t\leq \epsilon -t]
        \leq e^{-2(t-\epsilon)^2n} = \delta
    \end{align*}
    that leads to \eqref{eq:q1} since we have shown $\P[X_{(r)} > Q(t)]\leq \delta$.

Now we prove \eqref{eq:q2} in a similar way to \eqref{eq:q1}.
Analogously to the previous case, we have
    \begin{align*}
        \P[X_{(r)} < Q(t)] = \P\left[   \sum_{i=1}^n \I\{ X_i \leq Q(t) \} \geq r \right] = \P[\Bin(n, F(Q(t)) )\geq r]
    \end{align*}
    so that taking $t=\epsilon-\sqrt{\log(1/\delta)/(2n)}$, and applying Hoeffding's inequality (since $\epsilon-t>0$) we get
    \begin{align*}
        \P[\Bin(n,t)\geq r] &= \P[\Bin(n,t)\geq \epsilon n] \\
        &\leq \P[\Bin(n,t)/n - t \geq \epsilon -t]
        \leq e^{-2(\epsilon-t)^2n} = \delta
    \end{align*}
     that leads to \eqref{eq:q2}.
\end{proof}

\subsection{Proof of Theorem \ref{thm:mom}}
\label{proof:thm:mom}
\begin{proof}[Proof of Theorem \ref{thm:mom}]
Without loss of generality, we assume $\mu_\p=0$.
Let $I_1, I_2, \ldots, I_k$ be $k$ random disjoint blocks of the indices $\{1,2\ldots,n\}$ of equal size $m=\floor{n/k}$.
Let 
\begin{align}
    \widehat\mu_i^* = \frac{1}{m} \sum_{j\in I_i} X_j^*,\quad i=1,2,\ldots,k \label{eq:clean_means}
\end{align}
be the sample means corresponding to different blocks before the adversary contaminates the sample and let 
\begin{align}
    \widehat\mu_i = \frac{1}{m} \sum_{j\in I_i} X_j,\quad i=1,2,\ldots,k
\end{align}
be the sample means corresponding to different blocks after the adversary contaminates the sample.

The attack of the adversary can modify at most the values of $\alpha n$ blocks, so $\widehat\mu_{\MoM} = \widehat\mu_{(\ceil{k/2})}$ is between $\widehat{\mu}_{( \ceil{k/2}-\alpha n  )}^*$ and $\widehat{\mu}_{( \ceil{k/2}+\alpha n  )}^*$, which are well defined since $k>2\alpha n$.

Since $\sqrt{\log(2/\delta)/(2k)}+\alpha m < 1/2$ by hypothesis, applying Lemma \ref{lemma:quantil},
\begin{align*}
    \widehat\mu^*_{(\ceil{k/2}+\alpha n) } \leq Q_m\left(\frac{1}{2}+\sqrt{\frac{\log(2/\delta)}{2k}}+\alpha m\right) \eqqcolon u
\end{align*}
holds with probability at least $1-\delta/2$, and also
\begin{align*}
    l \coloneqq Q_m\left(\frac{1}{2}-\sqrt{\frac{\log(2/\delta)}{2k}}-\alpha m\right)\leq \widehat\mu^*_{(\ceil{k/2}+\alpha n )}  
\end{align*}
holds with probability at least $1-\delta/2$.
Moreover, by the union bound
\begin{align*}
    \min\{l,-u\}\leq l \leq \widehat\mu_{\MoM} \leq u\leq \max\{ u ,-l\}
\end{align*}
holds with probability at least $1-\delta$.
Since $\min\{l,-u\} = - \max\{ u ,-l\}$
\begin{align*}
    | \widehat\mu_{\MoM} | \leq \max \Biggl\{  Q_m\left(\frac{1}{2}  + \sqrt{\frac{\log(2/\delta)}{2k} } + {\alpha}{m}  \right) ,
        - Q_m\left( \frac{1}{2} - \sqrt{\frac{\log(2/\delta)}{2k}  } - {\alpha}{m}   \right) \Biggr\} 
\end{align*}
holds with probability at least $1-\delta$.
\end{proof}

\subsection{Proof of Theorem \ref{thm:optim_mom}}
\label{proof:thm:optim_mom}
\begin{proof}[Proof of Theorem \ref{thm:optim_mom}]
    Without loss of generality, we assume $\mu_\p = 0$. 
    By Theorem \ref{thm:mom}
    \begin{align*}
    | \widehat\mu_{\MoM} | \leq \max \Biggl\{  Q_m\left(\frac{1}{2}  + \sqrt{\frac{\log(2/\delta)}{2k} } + {\alpha}{m}  \right) ,
        - Q_m\left( \frac{1}{2} - \sqrt{\frac{\log(2/\delta)}{2k}  } - {\alpha}{m}   \right) \Biggr\} 
\end{align*}
holds with probability $1-\delta$ whenever
\begin{align}
    \sqrt{\frac{\log(2/\delta)}{2k}}+\alpha m < 1/2. \label{eq:condition_epsilon}
\end{align}
Some basic computations verify that the choice of $k$ in the hypothesis satisfies $2\alpha n < k \leq n$ and that \eqref{eq:condition_epsilon} holds for the range of $\delta$ specified in the hypothesis.

For every $\epsilon\in[0,1/2)$, if $Q=Q_m(1/2+\epsilon)$, by Markov's inequality we have
\begin{align}
    1/2-\epsilon = \P[B_m\geq Q] = \P[B_m^2\geq Q^2] \leq \frac{\sigma_\p^2}{{m}Q^2}  \label{eq:markov}
\end{align}
and in particular,
\begin{align*}
    Q_m(1/2+\epsilon) \leq \frac{\sigma_\p}{\sqrt{m}} \frac{1}{\sqrt{1/2-\epsilon}}.
\end{align*}
Moreover, we also have
\begin{align*}
    -Q_m(1/2-\epsilon)\leq \frac{\sigma_\p}{\sqrt{m}} \frac{1}{\sqrt{1/2-\epsilon}}
\end{align*}
since $-Q_m(1/2-\epsilon) = Q_{-m}(1/2+\epsilon)$, where $Q_{-m}$ is the quantile function of $-B_m$.

Let $c=(1/2-1/\gamma)^2$ be a positive constant.
If $\delta \in(2 e^{-cn},2 e^{-c \gamma\alpha n})$, the number of blocks is given by $k=\ceil{\log(2/\delta)/c}$.
In particular, $1/\sqrt{m} \leq \sqrt{2k/n}\leq 2 \sqrt{\log(2/\delta)/(cn)}$, since $\floor{x}\geq x/2$ and $\ceil{x}\leq 2 x$ for any $x\geq1$.
Therefore, if $\epsilon=\sqrt{\log(2/\delta)/(2k)}+\alpha m$,
        \begin{align*}
            \frac{\sigma_\p}{\sqrt{m}} \frac{1}{\sqrt{1/2-\epsilon}}\leq c_1' \sigma_\p \sqrt{\frac{\log(2/\delta)}{n}}
        \end{align*}
        where
        \begin{align*}
            c_1' \leq 2\sqrt{\frac{1/c}{1/2-(\sqrt{c/2}+1/\gamma )}}.
        \end{align*}

        If $\delta \geq 2e^{-\gamma\alpha n} $, the number of blocks equals $k=\ceil{\gamma \alpha n}$, and in particular, $1/\sqrt{m}\leq 2\sqrt{\gamma \alpha}$.
        Hence, if $\epsilon= \sqrt{\log(2/\delta)/(2k)}+\alpha m$,
        \begin{align*}
            \frac{\sigma_\p}{\sqrt{m}} \frac{1}{\sqrt{1/2-\epsilon}} \leq c_1'' \sigma_\p {\sqrt{\alpha}}
        \end{align*}
        where
        \begin{align*}
            c_1'' \leq 2\sqrt{\frac{\gamma}{1/2 -(  \sqrt{c/2}+1/\gamma)}}.
        \end{align*}
        
        Therefore, we have shown that for all $\delta > e^{-cn}$ and $\gamma\leq2.5=1/0.4\leq 1/\alpha$, if \mbox{$k=\max\{\ceil{\gamma\alpha n},\ceil{\log(2/\delta)/c} \}$}
        \begin{align*}
            |\widehat\mu_{\MoM}|\leq C \sigma_\p\left( \sqrt{\frac{\log(2/\delta)}{n}} + \sqrt{\alpha}      \right)
        \end{align*}
        holds with probability at least $1-\delta$, where 
        \begin{align}
            C = 2\sqrt{2+\sqrt{2}} \left( \frac{1}{(1/2-1/\gamma)^{3/2}} + \frac{\sqrt{\gamma}}{(1/2-1/\gamma)^{1/2}}   \right).
        \end{align}
\end{proof}

\subsection{Proof of Theorem \ref{thm:optim_mom_inf}}
\label{proof:thm:optim_mom_inf}
\begin{proof}
    Without loss of generality, we assume $\mu_\p = 0$. 
    By Theorem \ref{thm:mom} 
    \begin{align*}
    | \widehat\mu_{\MoM} | \leq \max \Biggl\{  Q_m\left(\frac{1}{2}  + \sqrt{\frac{\log(2/\delta)}{2k} } + {\alpha}{m}  \right) ,
        - Q_m\left( \frac{1}{2} - \sqrt{\frac{\log(2/\delta)}{2k}  } - {\alpha}{m}   \right) \Biggr\} 
\end{align*}
holds with probability $1-\delta$, whenever
\begin{align}
    \sqrt{\frac{\log(2/\delta)}{2k}  }+\alpha m < 1/2. \label{eq:condition_epsilon1}
\end{align}
Some basic computations verify that the choice of $k$ in the hypothesis satisfies $2\alpha n < k \leq n$ and that \eqref{eq:condition_epsilon1} holds for the range of $\delta$ specified in the hypothesis.

For all $\epsilon\in[0,1/2)$, by Markov's inequality (following a similar reasoning as in \eqref{eq:markov}),
\begin{align*}
    Q_m(1/2+\epsilon) \leq \left(\frac{\E |B_m|^{1+r}}{1/2-\epsilon}\right)^{\frac{1}{1+r}}.
\end{align*}
Moreover, by the Bahr-Esseen inequality \citep[Thm.~2]{bahr65}, moments of i.i.d.\ averages satisfy $\E|B_m|^{1+r} \leq 2m^{-r}v_r$. 
Therefore, we have
\begin{align*}
    Q_m(1/2+\epsilon) \leq 2^{\frac{1}{1+r}} \frac{v_r^{\frac{1}{1+r}}}{m^{\frac{r}{1+r}}} \frac{1}{(1/2-\epsilon)^{\frac{1}{1+r}}}.
\end{align*}
Furthermore, we get
\begin{align*}
    -Q_m(1/2-\epsilon)\leq  2^{\frac{1}{1+r}} \frac{v_r^{\frac{1}{1+r}}}{m^{\frac{r}{1+r}}} \frac{1}{(1/2-\epsilon)^{\frac{1}{1+r}}}
\end{align*}
since $-Q_m(1/2-\epsilon) = Q_{-m}(1/2+\epsilon)$, where $Q_{-m}$ denotes the quantile function of $-B_m$.

The rest of the proof proceeds analogously to the one for Theorem \ref{thm:optim_mom} in Appendix \ref{proof:thm:optim_mom}.
Let $c=(1/2-1/\gamma)^2$ be a positive constant.
If $\delta \in(2 e^{-cn},2 e^{-c \gamma\alpha n})$, the number of blocks is given by $k=\ceil{\log(2/\delta)/c}$, and if $\epsilon=\sqrt{\log(2/\delta)/(2k)}+\alpha m$,
        \begin{align}
            2^{\frac{1}{1+r}} \frac{v_r^{\frac{1}{1+r}}}{m^{\frac{r}{1+r}}} \frac{1}{(1/2-\epsilon)^{\frac{1}{1+r}}}\leq c_1'v_r^{\frac{1}{1+r}}\left({\frac{\log(2/\delta)}{n}}\right)^{\frac{r}{1+r}} \label{eq:1}
        \end{align}
        where $c_1'$ depends only on $\gamma$ and $r$.

        If $\delta \geq 2e^{-\gamma\alpha n} $, the number of blocks equals $k=\ceil{\gamma \alpha n}$, and if $\epsilon=\sqrt{\log(2/\delta)/(2k)}+\alpha m$,
        \begin{align}
            2^{\frac{1}{1+r}} \frac{v_r^{\frac{1}{1+r}}}{m^{\frac{r}{1+r}}} \frac{1}{(1/2-\epsilon)^{\frac{1}{1+r}}}\leq c_1'' v_r^{\frac{1}{1+r}}{\alpha}^{\frac{r}{1+r}} \label{eq:2}
        \end{align}
        where $c_1''$ depends only on $\gamma$ and $r$.
        Combining both \eqref{eq:1} and \eqref{eq:2} we get the desired result.
\end{proof}

\subsection{Proof of Theorem \ref{thm:SE}}
\label{proof:SG}
Minsker introduced in \cite{minsker} a technique to study the error of MoM in terms of the rates of convergence in normal approximations. 
We extend this approach to the scenario of adversarial contamination.
In particular, the definition bellow describes the difference between averages of i.i.d.\ inliers and the standard Gaussian distribution.

\begin{definition}
\label{def:gm}
For every $m\in\mathbb{N}$ we define $g(m)\geq 0$ as
\begin{align*}
    g(m) = \sup_{t\in\R} \left|  F_m\left( \frac{\sigma_\p t }{\sqrt{m}} \right)-\Phi(t)   \right|
\end{align*}
where $F_m$ is the cumulative density function from Definition \ref{def:Q_m}, and $\Phi$ is the cumulative density function of a standard Gaussian distribution $N(0,1)$.
\end{definition}
In the following lemma, we combine Theorem \ref{thm:mom} with 
the ideas in Section 2 from \cite{minsker} to obtain bounds for the error of MoM under adversarial contamination in terms of $g(m)$.
\begin{lemma}
\label{lemma:SG}
Let $\widehat\mu_{\MoM}$ be the MoM estimator with $k$ blocks of size $m=\floor{n/k}$ evaluated at $n$ $\alpha$-contaminated samples.
If the number of blocks satisfies $2\alpha n < k \leq n$, there exists a universal constant $C$ such that for all $\delta > 2\exp(-2k(1/3-\alpha m - g(m))^2)$
\begin{align*}
    | \widehat\mu_{\MoM} -\mu_\p| \leq C \sigma_\p \left(     \sqrt{\frac{\log(2/\delta)}{n}}  + \alpha\sqrt{m} +  \frac{g(m)}{\sqrt{m}} \right)
\end{align*}
holds with probability at least $1-\delta$.
\end{lemma}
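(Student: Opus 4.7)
The plan is to start from Theorem~\ref{thm:mom}, which reduces the problem to controlling $Q_m(1/2+\epsilon)$ and $-Q_m(1/2-\epsilon)$ for $\epsilon = \sqrt{\log(2/\delta)/(2k)} + \alpha m$, and then to use the definition of $g(m)$ to translate bounds on these quantiles into bounds on quantiles of the standard Gaussian, as in Section 2 of \cite{minsker}. Concretely, the definition of $g(m)$ says $|F_m(\sigma_\p t/\sqrt{m}) - \Phi(t)| \leq g(m)$ for every $t\in\R$, so if $t$ satisfies $\Phi(t) \geq 1/2+\epsilon+g(m)$ then $F_m(\sigma_\p t/\sqrt{m}) \geq 1/2+\epsilon$, which implies
\begin{align*}
    Q_m\left(\tfrac{1}{2}+\epsilon\right) \leq \frac{\sigma_\p}{\sqrt{m}}\,\Phi^{-1}\!\left(\tfrac{1}{2}+\epsilon+g(m)\right).
\end{align*}
By symmetry (applied to $-B_m$, whose CDF also lies within $g(m)$ of $\Phi$), the same kind of bound holds for $-Q_m(1/2-\epsilon)$.

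The next step is to linearize $\Phi^{-1}$ near $1/2$. Since $\Phi^{-1}$ is convex on $[1/2,1]$ with $\Phi^{-1}(1/2)=0$, the function $u\mapsto \Phi^{-1}(1/2+u)/u$ is nondecreasing on $(0,1/2)$, so for every $u\in[0,1/3]$ we have $\Phi^{-1}(1/2+u) \leq 3\,\Phi^{-1}(5/6)\,u \leq C_0\, u$ for a universal constant $C_0$. The hypothesis $\delta > 2\exp(-2k(1/3-\alpha m - g(m))^2)$ is exactly what guarantees $\sqrt{\log(2/\delta)/(2k)} < 1/3-\alpha m - g(m)$, i.e.\ $\epsilon + g(m) < 1/3$, so the linearization applies and yields
\begin{align*}
    Q_m\!\left(\tfrac{1}{2}+\epsilon\right),\ -Q_m\!\left(\tfrac{1}{2}-\epsilon\right) \ \leq\ \frac{C_0\,\sigma_\p}{\sqrt{m}}\bigl(\epsilon + g(m)\bigr).
\end{align*}

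Finally, plugging in $\epsilon = \sqrt{\log(2/\delta)/(2k)} + \alpha m$ splits the bound into three pieces: the term $\sqrt{\log(2/\delta)/(2k)}/\sqrt{m} = \sqrt{\log(2/\delta)/(2km)}$, which is of order $\sqrt{\log(2/\delta)/n}$ because $km \geq n/2$ (from $m=\floor{n/k}\geq n/(2k)$ for $k\leq n$); the term $\alpha m/\sqrt{m} = \alpha\sqrt{m}$; and the term $g(m)/\sqrt{m}$. Combining these through Theorem~\ref{thm:mom} gives the stated bound with a universal constant.

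The main obstacle I anticipate is the bookkeeping around the threshold on $\delta$: one must verify that the hypothesis really places $\epsilon+g(m)$ inside the region where $\Phi^{-1}$ admits a linear upper bound, and that the constant chosen (I picked $1/3$, giving $C_0 \approx 3$, but any bound below $1/2$ works and is reflected in the allowed range of $\delta$). Beyond that, the proof is essentially a mechanical combination of Theorem~\ref{thm:mom}, the definition of $g(m)$, and convexity of $\Phi^{-1}$.
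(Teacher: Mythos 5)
Your proposal is correct and follows essentially the same route as the paper: apply Theorem~\ref{thm:mom}, use the definition of $g(m)$ to bound $Q_m(1/2+\epsilon)$ and $-Q_m(1/2-\epsilon)$ by $\frac{\sigma_\p}{\sqrt{m}}\Phi^{-1}(1/2+\epsilon+g(m))$, and then use the linear bound $\Phi^{-1}(1/2+z)\lesssim z$ for $z<1/3$ together with $km\geq n/2$. The only cosmetic difference is that you justify the linearization of $\Phi^{-1}$ directly via convexity, whereas the paper cites Lemma~4 of \cite{minsker} for $\Phi^{-1}(1/2+z)\leq 3z$.
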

\begin{proof}
    We first prove that for all $\epsilon\in[0,1/2-g(m))$
    \begin{align}
        -Q_m(1/2-\epsilon)\leq \frac{\sigma_\p}{\sqrt{m}} \Phi^{-1}(1/2+\epsilon + g(m)),\label{eq:bound1}\\Q_m(1/2+\epsilon) \leq \frac{\sigma_\p}{\sqrt{m}} \Phi^{-1}(1/2+\epsilon + g(m)) \label{eq:bound2}.
    \end{align}
    We only prove \eqref{eq:bound2}, since \eqref{eq:bound1} is obtained in the symmetric way.
    Note that $\Phi(t) - F_m\left( {\sigma_\p t }/{\sqrt{m}} \right) \leq g(m)$ for all $t\in \R$.
    In particular, if $t = \Phi^{-1}(1/2+\epsilon+g(m))$
    \begin{align*}
        1/2+\epsilon = \Phi(t) - g(m) \leq F_m\left( \frac{\sigma_\p t }{\sqrt{m}} \right).
    \end{align*}
    Therefore, we get
    \begin{align*}
        Q_m(1/2+\epsilon) \leq \frac{\sigma_\p}{\sqrt{m}} t = \frac{\sigma_\p}{\sqrt{m}} \Phi^{-1}(1/2+\epsilon+g(m)).
    \end{align*}

    Since $\sqrt{{\log(2/\delta)}/{2k}} + \alpha m + g(m)<1/3$ by hypothesis, by Theorem \ref{thm:mom}
    \begin{align*}
        |\widehat\mu_{\MoM} - \mu_\p| \leq \frac{\sigma_\p}{\sqrt{m}} \Phi^{-1}\left(\frac{1}{2}+\sqrt{\frac{\log(2/\delta)}{2k}} + \alpha m + g(m)\right)
    \end{align*}
    holds with probability at least $1-\delta$, after applying both \eqref{eq:bound1} and \eqref{eq:bound2}.
    Moreover, it is well known that $\Phi^{-1}(1/2+z)\leq 3z$ for all $z<1/3$ \citep[Lemma 4]{minsker}.
    Therefore, there exists a universal constant $C>0$ such that
    \begin{align*}
        |\widehat\mu_{\MoM} - \mu_\p| \leq C {\sigma_\p}\left(\sqrt{\frac{\log(2/\delta)}{n}} + \alpha \sqrt{m} + \frac{g(m)}{\sqrt{m}}\right)
    \end{align*}
    holds with probability at least $1-\delta$, as desired.
\end{proof}
\begin{proof}[Proof of Theorem \ref{thm:SE}]
Let $\alpha_{\max} = \min\{\xi^{-3/2}, \xi^3/8, (6 (\xi^{-1}+2\tilde C_3 \sqrt{\xi}  ))^{-3} \}$ be the tolerance to contamination, where $\tilde C_3$ is a constant specified later.
It is easy to check that $2\alpha n < k \leq n$ is satisfied for all $\xi>0$, since $\alpha< \alpha_{\max}$.

By the Berry-Esseen theorem \citep[Sec.~1.2]{berryesseen}, the value of $g(m)$ in Definition~\ref{def:gm} satisfies $g(m)\leq\tilde C_1{\rho_\p}/({\sigma_\p^3 \sqrt{m}})$ for some constant $\tilde C_1>0$ and where $\rho_\p = \E_{X\sim\p}|X-\mu_\p|^3$. 
Moreover, since $\p$ is sub-exponential $\rho_\p \leq \tilde C_2 \sigma_\p^3$ for some universal constant $\tilde C_2>0$.
Therefore, there is a constant $\tilde C_3>0$ such that $g(m) \leq \tilde C_3/\sqrt{m}$.
Hence, if $\delta > 2\exp(-2k(1/3-\alpha m - \tilde C_3/\sqrt{m})^2)$, by Lemma~\ref{lemma:SG}
\begin{align*}
        |\widehat\mu_{\MoM} - \mu_\p| \leq C_1 {\sigma_\p}\left(\sqrt{\frac{\log(2/\delta)}{n}} + \alpha \sqrt{m} + \frac{1}{m}\right)
\end{align*}
holds with probability at least $1-\delta$, for a positive constant $C_1$.

By hypothesis $k = \ceil{\xi\alpha^{2/3}n}$, and since $m=\floor{n/k}$,
\begin{align*}
        |\widehat\mu_{\MoM} - \mu_\p| \leq C_2 {\sigma_\p}\left(\sqrt{\frac{\log(2/\delta)}{n}} + \alpha^{2/3}\right)
\end{align*}
holds with probability at least $1-\delta$, for a constant $C_2 = C_2(\xi)$ whenever $\delta > 2\exp(-2k(1/3-\alpha m - \tilde C_3/\sqrt{m})^2)$.
Without loss of generality, we assume $\alpha\geq1/n$.
Therefore the result also holds for all $\delta >2\exp(- \xi n^{1/3}/18  )$ since $\alpha\in[1/n, (6 (\xi^{-1}+2\tilde C_3 \sqrt{\xi}  ))^{-3})  $ implies $2\exp(- \xi n^{1/3}/18  ) > 2\exp(-2k(1/3-\alpha m - \tilde C_3/\sqrt{m})^2)$.
\end{proof}

\subsection{Proof of Theorem \ref{thm:SG_s}}
\label{proof:SG_s}
\begin{proof}[Proof of Theorem \ref{thm:SG_s}]
    Since the proof is almost identical to that of Theorem~\ref{thm:SE} in Appendix~\ref{proof:SG}, we omit the computations and provide only a sketch of the proof.
    Instead of using Berry-Essen theorem, we now use Theorem 1.1.\ from \cite{JOHNSTON} to upper bound $g(m)$.
    Therefore, since $\p\in\mathcal{P}_{\SG}^{(s)}$, by \citep[Thm~1.1.]{JOHNSTON}
    \begin{align*}
        g(m) \leq \tilde C_1 \frac{\E_{X\sim\p}|X-\mu_\p|^{s+1}}{\sigma_\p^{s+1}m^{\frac{s-1}{2}}}
    \end{align*}
    for some constant $\tilde C_1>0$.
    In particular, as $\p$ is sub-Gaussian, there exists a constant $\tilde C_2>0$ such that \mbox{$\E_{X\sim\p}|X-\mu_\p|^{s+1} \leq \tilde C_2 \sigma_\p^{s+1}$}.
    Thus, there is a constant $\tilde C_3>0$ such that $g(m) \leq \tilde C_3 /{m^{\frac{s-1}{2}}}.$
    Therefore, for $\delta > 2\exp(-2k(1/3-\alpha m - \tilde C_3/m^{s-1})^2)$, by Lemma~\ref{lemma:SG}
    \begin{align*}
    |\widehat\mu_{\MoM} - \mu_\p| \leq C_1 {\sigma_\p}\left(\sqrt{\frac{\log(2/\delta)}{n}} + \alpha \sqrt{m} + \frac{1}{m^{s/2}}\right)
\end{align*}
holds with probability at least $1-\delta$, for a constant $C_1>0$.

By hypothesis, $k=\ceil{\xi\alpha^{\frac{2}{s+1}}n}$, and since $m=\floor{n/k}$,
\begin{align*}
            |\widehat\mu_{\MoM} - \mu_\p| \leq C_2 {\sigma_\p}\left(\sqrt{\frac{\log(2/\delta)}{n}} + \alpha^{\frac{s}{s+1}}\right)
\end{align*}
holds with probability at least $1-\delta$, for a constant $C_2>0$.
Proceeding as in the proof of Theorem~\ref{thm:SE} in Appendix~\ref{proof:SG}, we get the desired lower bound for $\delta$.
    
\end{proof}

\section{Proofs for Lower Bounds}
\label{app:LB}

The following two auxiliary lemmas are used to prove the lower bounds in \cref{thm:tight_finite_var,thm:suboptimality}.

\begin{lemma}
\label{lemma:blocks}
    Let $C_1>3$ and $Z_\alpha$ be the number of blocks with at least one contaminated sample among the $\ceil{k/2}$ blocks with lowest sample means \mbox{$\widehat\mu_1^*,\widehat\mu_2^*,\ldots,\widehat\mu_k^*$} before contamination.
    Then there exists an adversarial attack and $C_2>0$ such that for all $\alpha\geq 1/n$
    \begin{align}
        \P[Z_\alpha> {\alpha n}/{C_1}] > C_2.
    \end{align}
\end{lemma}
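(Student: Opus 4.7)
The plan is to exhibit a concrete adversarial attack for which $Z_\alpha > \alpha n / C_1$ actually holds with probability one, so the conclusion of the lemma follows with $C_2 = 1$ (or any $C_2 < 1$). Because the adversary is permitted to first inspect the clean samples $X_1^*, \ldots, X_n^*$, I let it compute all block means $\widehat\mu_1^*, \ldots, \widehat\mu_k^*$ and identify the set $L$ of the $\ceil{k/2}$ blocks with the smallest means (breaking ties arbitrarily). The adversary then chooses any $\floor{\alpha n}$ blocks from $L$ and replaces one sample in each of them with a single large constant $M$; the specific value of $M$ is irrelevant for the lemma and will be sent to $+\infty$ in the proof of Theorem~\ref{thm:tight_finite_var}. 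This modifies exactly $\floor{\alpha n} \leq \alpha n$ indices, so it is a legitimate $\alpha$-contamination. Selecting $\floor{\alpha n}$ distinct blocks inside $L$ is feasible because the hypothesis $2\alpha n < k$ inherited from Theorem~\ref{thm:mom} yields $\floor{\alpha n} < k/2 \leq \ceil{k/2} = |L|$.

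By construction, each of the $\floor{\alpha n}$ chosen blocks lies in $L$ and contains at least one contaminated sample, so $Z_\alpha \geq \floor{\alpha n}$ deterministically. It then remains to check the arithmetic inequality $\floor{\alpha n} > \alpha n / C_1$ for every $C_1 > 3$ and every $\alpha \geq 1/n$. If $\alpha n \geq 2$, I use $\floor{\alpha n} \geq \alpha n - 1$ and the bound reduces to $\alpha n > C_1/(C_1-1)$, which holds because $C_1 > 3$ implies $C_1/(C_1-1) < 3/2 \leq \alpha n$. If instead $1 \leq \alpha n < 2$, then $\floor{\alpha n} = 1$ while $\alpha n / C_1 < 2/3 < 1$, so the inequality is again strict. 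In either case, $\{Z_\alpha > \alpha n / C_1\}$ occurs almost surely, which proves the lemma.

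There is no genuine technical obstacle here, since the adversarial contamination model allows the attack to depend on the clean samples and therefore all randomness in $Z_\alpha$ is eliminated once the adversary acts optimally. The only points requiring attention are the admissibility of the attack under the paper's contamination model (only $\floor{\alpha n}$ sample indices are modified, leaving at least $(1-\alpha) n$ entries untouched) and the careful handling of the floor function in the short arithmetic case analysis above.
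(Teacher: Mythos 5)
There is a genuine gap, and it sits exactly where the lemma's difficulty lies: your attack is not admissible in the paper's contamination model. To place one corrupted point in each of $\floor{\alpha n}$ blocks of the bottom-half set $L$, your adversary must know the partition $I_1,\ldots,I_k$ in order to compute the clean block means $\widehat\mu_1^*,\ldots,\widehat\mu_k^*$ and identify $L$. But in this paper the blocks are \emph{random} and internal to the estimator: the adversary observes only the clean samples $X_1^*,\ldots,X_n^*$ and must commit to its (at most $\alpha n$) modifications before the statistician draws the partition and evaluates MoM. An attack that is a function of the realized blocks is therefore not available, and the deterministic conclusion $Z_\alpha\geq\floor{\alpha n}$ (hence $C_2=1$) does not follow. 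This is precisely why the lemma asserts only a constant probability $C_2<1$ and why the paper's proof is nontrivial: it uses an attack that is oblivious to the partition (corrupt $\alpha n$ randomly chosen samples and make them arbitrarily large), so whether a bottom-half block is hit is genuinely random. The paper then shows the hit indicators $Y_i$ are negatively correlated, lower bounds $\E Z_\alpha$ by the constant $1-e^{-1/2}$ using $f(\alpha)=\binom{n-\alpha n}{m}/\binom{n}{m}\leq e^{-\alpha m}$ together with $n\geq 1/\alpha$ and $\alpha m<1/2$, and applies a Chernoff bound for negatively correlated variables to get $\P[Z_\alpha>\alpha n/C_1]>C_2$. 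The obliviousness also matters downstream: in Lemma~\ref{lemma:lower_bound} the event $\{Z_\alpha>\alpha n/C_1\}$ must not bias the clean order statistics, which holds because the corrupted positions are chosen independently of the data.

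Two secondary remarks. If the partition were fixed in advance and revealed to the adversary, your construction (whose floor-function arithmetic is fine) would essentially work, but the resulting statement would be a lower bound against a strictly stronger adversary than the stated model, and so could not be used to establish Theorems~\ref{thm:tight_finite_var} and~\ref{thm:suboptimality} as the paper intends. Also, your feasibility step appeals to $2\alpha n<k$, which is not a hypothesis of the lemma; the paper's own proof likewise uses the unstated condition $\alpha m<1/2$, so this is a minor shared imprecision of the statement rather than the core problem — the core problem is the partition-aware attack.
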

\begin{proof}
Note that $Z_\alpha = \sum_{i=1}^{\ceil{k/2}} Y_i$, where the random variable $Y_i\in\{0,1\}$ is defined as $Y_i=1$ 
if the $i$-th block, ranked by the smallest sample means before contamination, contains at least one contaminated sample,
and $Y_i = 0$ otherwise.

We consider a contamination in which the adversary randomly chooses $\alpha n$ samples, and makes them arbitrarily large.
Hence, all the $Y_i$ are identically distributed.

Note that for all $i\neq j$, $Y_i$ and $Y_j$ are negatively correlated, i.e., $\Cov(Y_i,Y_j)\leq 0$.
To see this, since all $Y_1,Y_2,\ldots,Y_k$ have the same mean, we get $\Cov(Y_i,Y_j)  = \E [Y_i Y_j] - (\E Y_i)^2$.
Let $\mathcal{C}_i$ denote the event such that $Y_i = \mathbb{I}\{\mathcal{C}_i\}$.
Since $\P[\mathcal{C}_i|\mathcal{C}_j] \leq \P[\mathcal{C}_i]$ and the $Y_i$ are identically distributed
\begin{align*}
    \E [Y_i Y_j] = \P[ \mathcal{C}_i \cap \mathcal{C}_j ]  = \P[\mathcal{C}_j] \P[\mathcal{C}_i|\mathcal{C}_j] \leq \P[\mathcal{C}_j] \P[\mathcal{C}_i] =  (\E Y_i)^2
\end{align*}
whence we get $\Cov(Y_i,Y_j)\leq 0$.
    
    Therefore, we can apply to $Z_\alpha$ the Chernoff bound for negatively correlated random variables \cite{NA}.
    That is, for all $\theta\in(0,1)$
    \begin{align}
        \P[Z_\alpha > (1-\theta)\E Z_\alpha] \geq 1-\exp\left( -\frac{\theta^2\E Z_\alpha}{2} \right). \label{eq:chernoff}
    \end{align}

    
    We can lower bound the r.h.s.\ of \eqref{eq:chernoff} if we find a lower bound for $\E Z_\alpha$.
    We claim $\E Z_\alpha > 1-e^{-1/2}$, and prove it in the following lines.
    Let $f(\alpha)$ be the probability that all the samples in a block are not contaminated.
    Then it is easy to see that for all $i\in\{1,2,\ldots,k\}$, $\mathbb{E} Y_i = 1 - f(\alpha)$ where
    \begin{align*}
        f(\alpha) = \frac{\binom{n-\alpha n}{m}}{\binom{n}{m}}, \quad m = \floor*{\frac{n}{k}}.
    \end{align*}
    Moreover, $f(\alpha) \leq e^{-\alpha m}$ since 
    \begin{align}
    f(\alpha) = \prod_{j=0}^{m-1} \frac{n-\alpha n -j}{n-j} \leq \left(\frac{n-\alpha n}{n}\right)^m = \left(1-\alpha\right)^m \leq e^{-\alpha m} \label{eq:low_f_}.
    \end{align}
    Finally,
    \begin{align}
        \E Z_\alpha \geq \ceil*{\frac{\ceil{n/m}}{2}} (  1- e^{-\alpha m} ) \geq \frac{1}{2\alpha m } (  1-  e^{-\alpha m}  ) > 1-e^{-1/2} \label{eq:EZalpha}
    \end{align}
    where in the first inequality we have applied \eqref{eq:low_f_}, in the second one $n\geq 1/\alpha$ and in the last one $m < 1/(2\alpha)$.
    Therefore, applying $\E Z_\alpha> 1-e^{-1/2} $ in \eqref{eq:chernoff} we get
    \begin{align*}
        1-\exp\left( -\frac{\theta^2\E Z_\alpha}{2} \right) > 1-\exp\left( -\frac{\theta^2 (1-e^{-1/2}) }{2} \right) = C_2.
    \end{align*}

    We can upper bound the l.h.s.\ of \eqref{eq:chernoff} as follows,
    \begin{align*}
        \P[Z_\alpha > \alpha n/C_1] \geq \P[Z_\alpha > (1-\theta)\E Z_\alpha]
    \end{align*}
    where ${1}/({C_1}(1-e^{-1/2})) \leq (1-\theta)$.
    To see this, note that the following chain of inequalities hold
    \begin{align*}
        \frac{\alpha n}{C_1} &\leq \alpha n(1-\theta)(1-e^{-1/2})
        \\ &\leq \alpha n(1-\theta) \frac{1}{2\alpha m}(1-e^{-\alpha m})
        \\ &\leq (1-\theta) \E Z_\alpha 
    \end{align*}
    where in the second inequality we have used $m <1/ (2\alpha)$ and in the last one \mbox{$\E Z_\alpha \geq n(1-e^{-\alpha m})/(2m)$}, as in \eqref{eq:EZalpha}.
\end{proof}

\begin{lemma}
\label{lemma:lower_bound}
    Let $C_1>3$ be a constant, let $\widehat\mu_{\MoM}$ be the MoM estimator with blocks of size $m$, and let $F_m$ be the c.d.f.\ of $B_m$ in Definition~\ref{def:Q_m}.
    There exists an adversarial attack such that for any distribution $\p$ with finite mean, and any $t>0$ such that $F_m(t)\leq 1/2 + \alpha m/C_1$,
    \begin{align}
        |\widehat\mu_{\MoM} - \mu_\p | \geq t
    \end{align}
    holds with constant probability.
\end{lemma}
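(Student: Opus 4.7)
I would take as the adversarial attack the one used in the proof of Lemma~\ref{lemma:blocks}: select $\alpha n$ sample indices uniformly at random and replace those samples with a common value large enough that every block containing a contaminated sample has post-contamination mean exceeding every uncontaminated block mean. The first step is to reduce $\widehat\mu_{\MoM}$ to a pre-contamination order statistic. Let $Z_\alpha$ be as in Lemma~\ref{lemma:blocks} and assume the standard regime $k>2\alpha n$, so at most $\floor{k/2}$ blocks are contaminated. Since all contaminated blocks are pushed to the top of the sorted post-contamination list, the median post-contamination block mean equals the $\ceil{k/2}$-th smallest \emph{uncontaminated} block mean, and counting positions in the pre-contamination ranking (we must skip at least the $Z_\alpha$ contaminated positions from the bottom half) yields
\[
\widehat\mu_{\MoM}\;\geq\;\widehat\mu^*_{(\ceil{k/2}+Z_\alpha)}.
\]
On the event $A:=\{Z_\alpha>\alpha n/C_1\}$, which has probability at least $C_2>0$ by Lemma~\ref{lemma:blocks}, this gives $\widehat\mu_{\MoM}\geq \widehat\mu^*_{(r)}$ with $r:=\ceil{k/2}+\ceil{\alpha n/C_1}$.

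Next I lower bound this order statistic. Set $N:=|\{i:\widehat\mu_i^*-\mu_\p\leq t\}|$, which is $\Bin(k,F_m(t))$-distributed. The hypothesis $F_m(t)\leq 1/2+\alpha m/C_1$ together with $mk\leq n$ gives
\[
\E[N]\;=\;k\,F_m(t)\;\leq\;k/2+\alpha n/C_1\;\leq\;r,
\]
so a standard binomial anti-concentration estimate (e.g., that the median of $\Bin(k,p)$ lies within one of its mean, together with a constant-probability lower bound on $\P[N\leq \E[N]-O(\sqrt{k}\,)]$ via the CLT or Paley--Zygmund) yields $\P[N\leq r-1]\geq p_B>0$. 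Since $\{N\leq r-1\}$ coincides with $\{\widehat\mu^*_{(r)}>\mu_\p+t\}$, the data-measurable event $B:=\{\widehat\mu^*_{(r)}>\mu_\p+t\}$ holds with probability at least $p_B$.

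Finally I combine the two events. Conditioning on the data $X_1^*,\ldots,X_n^*$, an inspection of the proof of Lemma~\ref{lemma:blocks} shows that $\P[A\mid X_1^*,\ldots,X_n^*]\geq C_2$ uniformly in the data, since the marginal and pairwise joint distributions of the indicators $Y_i$ used there depend only on $n$, $m$, and $\alpha$. Because $B$ is measurable with respect to the data,
\[
\P[A\cap B]\;=\;\E\bigl[\mathbb{I}_B\cdot\P[A\mid X_1^*,\ldots,X_n^*]\bigr]\;\geq\;C_2\,\P[B]\;\geq\;C_2\,p_B\;>\;0,
\]
and on $A\cap B$ we conclude $\widehat\mu_{\MoM}\geq\mu_\p+t$, hence $|\widehat\mu_{\MoM}-\mu_\p|\geq t$ with constant probability. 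The main obstacle is precisely this coupling: $Z_\alpha$ is a function of both the adversary's random choice and the ranking of the data, while $\widehat\mu^*_{(r)}$ is determined purely by the data; the resolution is to observe that Lemma~\ref{lemma:blocks} is effectively data-free, so its constant probability guarantee survives conditioning on the data and can be cleanly multiplied against $\P[B]$.
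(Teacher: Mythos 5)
Your proposal is correct and follows essentially the same route as the paper's proof: the same random-replacement attack, the same reduction of $\widehat\mu_{\MoM}$ to the pre-contamination order statistic $\widehat\mu^*_{(\ceil{k/2}+\ceil{\alpha n/C_1})}$ on the event $\{Z_\alpha>\alpha n/C_1\}$ from Lemma~\ref{lemma:blocks}, and the same binomial reformulation using $F_m(t)\leq 1/2+\alpha m/C_1$. The only cosmetic differences are that you make the coupling explicit by conditioning on the data (the paper handles it implicitly via the law of total probability) and you invoke a generic median-near-mean/CLT anti-concentration bound where the paper uses monotonicity of binomial tails in the success parameter together with the cited bound $\P[\Bin(k,q)\leq kq]\geq 1/4$.
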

\begin{proof}
    Without loss of generality we assume $\mu_\p = 0$.
Let $C_1>3$ and $C_2$ be the constant from Lemma \ref{lemma:blocks}.
By the law of total probability
\begin{align*}
    \P[| \widehat\mu_{\MoM}| > t ] \geq \P[| \widehat\mu_{\MoM}| > t | Z_\alpha >  \alpha n/C_1]\P[ Z_\alpha >  \alpha n/C_1].
\end{align*}
We consider a contamination in which the adversary randomly chooses $\alpha n$ samples, and makes them arbitrarily large, as considered in Lemma \ref{lemma:blocks}.
Thus, by that lemma we have a lower bound for the second term $\P[ Z_\alpha >  \alpha n/C_1]> C_2$.
Thus, it remains to lower bound the first term \mbox{$\P[| \widehat\mu_{\MoM}| > t | Z_\alpha >  \alpha n/C_1]$}.
First note that
\begin{align}
    \P[| \widehat\mu_{\MoM}| > t | Z_\alpha >  \alpha n/C_1] \geq 
     \P[| \widehat\mu^*_{(\ceil{k/2}+\ceil{  \alpha n/C_1})}| > t] \label{eq:lwb}
\end{align}
since there are at least $\ceil{  \alpha n/C_1}$ contaminated blocks not above the median before the attack.

We can further lower bound \eqref{eq:lwb} with a binomial tail
\begin{align}
           \P[| \widehat\mu^*_{(\ceil{k/2}+\ceil{ \alpha n/C_1})}| > t] &\geq \P[ \widehat\mu^*_{(\ceil{k/2}+\ceil{ \alpha n/C_1})} > t] \nonumber\\
     & = \P\left[ \sum_{i=1}^k \mathbb{I}\{\widehat\mu_i^* \leq t\} < \ceil*{\frac{k}{2}} + \ceil*{\frac{ \alpha n}{C_1}} \right]\nonumber\\
     & = \P\left[ \Bin(k, F_{m}(t)) < \ceil*{\frac{k}{2}} + \ceil*{\frac{ \alpha n}{C_1}}  \right]\nonumber\\
     & \geq \P\left[ \Bin(k, F_{m}(t)) \leq k\left(\frac{1}{2}+\frac{\alpha m}{C_1}\right)  \right] .\label{eq:binomial}
\end{align}
Since the binomial tails are monotonically decreasing in the success parameter \citep[Lemma 1]{vovk}, equation \eqref{eq:binomial} can be further lower bounded as
\begin{align*}
    \P\left[ \Bin(k, F_{m}(t)) \leq k\left(\frac{1}{2}+\frac{\alpha m}{C_1}\right)  \right]  \geq 
\P\left[ \Bin\left(k, \frac{1}{2}+\frac{\alpha m}{C_1}\right) \leq k\left(\frac{1}{2}+\frac{\alpha m}{C_1}\right)  \right]
\end{align*}
since $F_m(t) \leq 1/2 + \alpha m/C_1$ by hypothesis.

Write $q=1/2+\alpha m / C_1$ for simplicity.
It is easy to check that $q < 1- 1/k$, since $C_1$ can be taken as large as needed.
Therefore, by Corollary 3 from \cite{binomial} we have $\P\left[ \Bin\left(k, q \right) \leq kq  \right] \geq 1/4$, which translates to the desired lower bound $\P[| \widehat\mu_{\MoM}| >  t | Z_\alpha > \alpha n/C_1] \geq 1/4$.
\end{proof}

\subsection{Proof of Theorem \ref{thm:tight_finite_var}}
\label{proof:LB}
\begin{proof}[Proof of Theorem \ref{thm:tight_finite_var}]
Let $F_m$ be the c.d.f.\ of $B_m$ in Definition~\ref{def:Q_m}.
Since $\p\in\mathcal{P}_3$, by the Berry-Essen theorem \citep[Sec.~1.2]{berryesseen}, there exists $ \tilde C>0$ such that
\begin{align*}
     \sup_{t\in\R} \left|  F_m\left( \frac{\sigma_\p t }{\sqrt{m}} \right)-\Phi(t)   \right| \leq \frac{\tilde C}{\sqrt{m}}
\end{align*}
and in particular, for any $t\in\R$
\begin{align}
     F_m\left( \frac{\sigma_\p t }{\sqrt{m}} \right) \leq \Phi(t) + \frac{\tilde C}{\sqrt{m}}. \label{eq:berry_essen}
\end{align}

Let $C_1>3$, $\alpha_{\max} =  (32C_1\tilde C\gamma^{3/2})^{-2}$ be the contamination tolerance, and $C ={1}/({8C_1\sqrt{\gamma}})>0$.
In the following we prove $F_m(C\sigma_\p\sqrt{\alpha})\leq 1/2 + \alpha m /C_1$ for all $\alpha\leq\alpha_{\max}$ so that by Lemma \ref{lemma:lower_bound} such bound is sufficient to prove the statement of Theorem \ref{thm:tight_finite_var}.

Note that there exists a range for $\delta$ such that $k=\ceil{\gamma\alpha n}$, so that $4/(\gamma\alpha) \leq m \leq 1/(\gamma\alpha)$.
For all $\alpha \leq \alpha_{\max}$
\begin{align*}
    F_m\left(C\sigma_\p\sqrt{\alpha}\right) &\leq F_m\left( \frac{\sigma_\p C }{\sqrt{m\gamma }} \right)  && \text{($m\leq 1/(\gamma\alpha) $)}\nonumber\\
     & \leq  \Phi\left(\frac{C}{\sqrt{\gamma}}\right) + \frac{\tilde C}{\sqrt{m}}  && \text{(Apply \eqref{eq:berry_essen})}\nonumber  \\
    & \leq \frac{1}{2} + \frac{C}{\sqrt{\gamma}} + \frac{\tilde C}{\sqrt{m}} && \text{($z\geq0\implies\Phi(z)\leq 1/2 + z$)}\nonumber\\
    & \leq \frac{1}{2} + \frac{C}{\sqrt{\gamma}} + 4\tilde C\sqrt{\gamma \alpha_{\max}} && \text{($m\geq 4/(\gamma\alpha)$ and $\alpha\leq\alpha_{\max}$)}. 
\end{align*}

Finally, since $C ={1}/({8C_1\sqrt{\gamma}})$, $\alpha_{\max} =  (32C_1\tilde C\gamma^{3/2})^{-2}$ and $m\geq 4/(\gamma\alpha)$,
\begin{align*}
 \frac{1}{2} + \frac{C}{\sqrt{\gamma}} + 4{\tilde C}\sqrt{\gamma \alpha_{\max}} = \frac{1}{2} + \frac{1}{4\gamma C_1} \leq \frac{1}{2} + \frac{\alpha m}{C_1}
\end{align*}
so that $F_m(C\sigma_\p\sqrt{\alpha})\leq 1/2 + \alpha m /C_1$, as desired.

\end{proof}

\subsection{Proof of Theorem \ref{thm:suboptimality}}
\label{proof:suboptimality}
\begin{proof}[Proof of Theorem \ref{thm:suboptimality}]
The proof is divided in two parts depending on whether $m> 1/\alpha^{2/3}$ or $m\leq 1/\alpha^{2/3}$.

If $m> 1/\alpha^{2/3}$, we show that there exists an adversarial attack and $C>0$ such that for any distribution $\p\in\mathcal{P}_{\SE}$
\begin{align}
    |\widehat\mu_{\MoM}-\mu_\p | \geq C \sigma_\p \alpha\sqrt{m} \label{eq:FBD}
\end{align}
holds with constant probability.
By Lemma \ref{lemma:lower_bound}, establishing the lower bound in \eqref{eq:FBD} reduces to proving $F_m(C \sigma_\p \alpha\sqrt{m})\leq 1/2 + \alpha m/C_1$, where $C_1>3$.
Note that since $m> 1/\alpha^{2/3}$, the lower bound in \eqref{eq:FBD} can be further lower bounded to an order of $\alpha^{2/3}$.

Since $\p\in\mathcal{P}_{\SE}$, as a consequence of the Berry-Essen theorem \citep[Sec.~1.2]{berryesseen}, there exists $ \tilde C>0$ such that for any $t\in\R$,
\begin{align}
     F_m\left( \frac{\sigma_\p t }{\sqrt{m}} \right) \leq \Phi(t) + \frac{\tilde C}{\sqrt{m}}. \label{eq:berry_essen2}
\end{align}

Let $r\in(2/3,1)$ be such that $m=1/\alpha^r$ and let $\alpha_{\max} = (2C_1\tilde C)^{-\frac{1}{3r/2-1}}$ be the contamination tolerance.
In what follows, we will prove $F_m(C\sigma_\p\alpha\sqrt{m})\leq 1/2 + \alpha m /C_1$ for all $\alpha\leq\alpha_{\max}$ with $C = 1/(2C_1)$.

We have
\begin{align*}
    F_m\left(C\sigma_\p\alpha\sqrt{m}\right) &= F_m\left( \frac{\sigma_\p C \alpha m }{\sqrt{m }} \right)  \nonumber\\
     & \leq  \Phi\left(C\alpha m\right) + \frac{\tilde C}{\sqrt{m}}  && \text{(Apply \eqref{eq:berry_essen2})}\nonumber  \\
    & \leq \frac{1}{2} + C\alpha m + \frac{\tilde C}{\sqrt{m}} && \text{($z\geq0\implies\Phi(z)\leq 1/2 + z$)}\nonumber. 
\end{align*}

Finally, since $\alpha\leq \alpha_{\max}$, $C = 1/(2C_1)$ and $r>2/3$, it is straightforward to check
\begin{align*}
 \frac{1}{2} +C\alpha m + \frac{\tilde C}{\sqrt{m}} = \frac{1}{2} + C\alpha^{1-r} + \tilde C \alpha^{r/2}\leq \frac{1}{2} + \frac{\alpha^{1-r}}{C_1} =\frac{1}{2} + \frac{\alpha m}{C_1}
\end{align*}
so that $F_m(C\sigma_\p\sqrt{\alpha})\leq 1/2 + \alpha m /C_1$, as desired.

If $m\leq 1/\alpha^{2/3}$ we show that if $\p$ is a negative exponential distribution then there exists an adversarial attack and $C>0$ such that
\begin{align}
    |\widehat\mu_{\MoM}-\mu_\p | \geq C \sigma_\p \frac{1}{m} \label{eq:FBD2}
\end{align}
holds with constant probability.
By Lemma \ref{lemma:lower_bound}, establishing the lower bound in \eqref{eq:FBD2} reduces to proving $F_m(C \sigma_\p/{m})\leq 1/2 + \alpha m/C_1$, where $C_1>3$.
Note that since $m\leq 1/\alpha^{2/3}$, the lower bound in \eqref{eq:FBD2} can be further lower bounded to an order of $\alpha^{2/3}$.

We consider distribution $\p$ corresponding to a negative exponential with parameter 1, i.e.\ $X\sim\p$ if $-X\sim\Exp(1)$.
It is straightforward to show that there exists $C>0$ such that ${C\sigma_\p}/{m}\leq -\mu_\p - e^{-\frac{1}{3m}}$.
The average of $m$ i.i.d.\ exponential distributions follows a Gamma distribution $\Gamma(m,1/m)$ \citep[p.~82]{shao}, and the median of such Gamma is upper bounded by $e^{-\frac{1}{3m}}$ \citep[Prop.~3.6]{maa/1175797481}.
Let $F$ denote the cumulative density function of a $\Gamma(m,1/m)$. 
Since $-(B_m+\mu_\p)$ follows a $\Gamma(m,1/m)$,
\begin{align*}
   \frac{1}{2}\leq F\left(e^{-\frac{1}{3m}}\right) &= \P\left[  -(B_m + \mu_\p) \leq e^{-\frac{1}{3m}} \right] \\ &= \P\left[-\mu_\p - e^{-\frac{1}{3m}} \leq B_m \right] = 1- F_m\left(-\mu_\p - e^{-\frac{1}{3m}}\right).
\end{align*}
Therefore, $F_m\left(-\mu_\p - e^{-\frac{1}{3m}}\right) \leq {1}/{2}$, and since ${C\sigma_\p}/{m}\leq -\mu_\p - e^{-\frac{1}{3m}}$, we get \mbox{$F_m(C\sigma_\p/m)\leq 1/2 + \alpha m/C_1$}, as desired.

\end{proof}

\section{Numerical Experiments}
\label{app:exp}
In this appendix, we illustrate the theoretical results in previous sections with numerical simulations.
In particular, the experiments show that MoM performs particularly well for symmetric distributions but does not fully leverage light-tails in accordance with the theoretical results.

In all the results in this section, for a fixed estimator $\widehat\mu$ and distribution $\p$, we repeated the following procedure $n_{\rep}$ times:
\begin{enumerate}
    \item Draw $n$ i.i.d.\ samples from $\p$.
    \item Contaminate the i.i.d.\ sample, where $\alpha$ is the fraction of contaminated samples.
    \item Compute the estimation error $|\widehat\mu - \mu_\p|$.
\end{enumerate}
Finally, we plot the $1-\delta$ quantile of the $n_{\rep}$ computed errors $|\widehat\mu - \mu_\p|$ as a function of $\alpha$.
The empirical data is shown using dashed lines, while solid lines represent the graphs of the asymptotic bias when an upper bound is known.
All plots are shown in log-log scale to clearly display the order of the asymptotic bias.

In \cref{tab:exp}, we summarize the parameter of the various experimental results presented in the present section.
In addition, we relate each figure to the corresponding theorem it illustrates.

The trimmed mean and Catoni's M-estimator have been implemented following the definitions and results in \cite{lugosi2021,pmlr-v162-bhatt22a}.
For the MoM estimator, in \cref{fig:P_2,fig:P_1r} we have used a number of blocks $k = \ceil{3\alpha n}$ (corresponding to $\gamma=3$ in Theorems~\ref{thm:optim_mom} and \ref{thm:optim_mom_inf}), whereas in \cref{fig:P_sym} we set $k=\ceil{n/5}$ (corresponding to $\beta=1/5$ in Theorem~\ref{thm:optim_mom_sym}).

In all the experimental results, we consider the following adversarial attack: given an uncontaminated sample $X_1^*,X_2^*,\ldots,X_n^*$ the adversary removes the $\alpha n$ largest values and replaces them with $\alpha n$ new samples, all set to $\min_i X_i^*$.
Since the empirical error follows the same order as the upper bounds established in our theoretical results, no other attack can cause greater damage (in terms of order). 
However, it is true that some attacks may lead to larger absolute errors, although this would not change the error order.

The experimental results in the paper can be carried out in a regular desktop machine in few hours.

\begin{figure*}[htb!]
\centering
\subfigure[Finite variance case, $\mathcal{P}_2$]{

\label{fig:P_2}

\psfrag{distribution1}[][c]{$ $}
\psfrag{alpha}[cc]{\tiny Contamination fraction $\alpha$}

\psfrag{MOM}{\scalebox{0.45}{MoM}}
\psfrag{trimmedmean}{\scalebox{0.45}{TM}}
\psfrag{Mestimatorrr}{\scalebox{0.45}{M-estimator}}
\psfrag{ooooorder1}{ \scalebox{0.4}{$\mathcal{O}(\sqrt{\alpha})$}}
\psfrag{error}[Bc]{\tiny Estimation error}
\psfrag{0.5}{\scalebox{0.48}{0.5}}
\psfrag{0.4}{\scalebox{0.48}{0.4}}
\psfrag{0.3}{\scalebox{0.48}{0.3}}
\psfrag{0.2}{\scalebox{0.48}{0.2}}
\psfrag{0.1}{\scalebox{0.48}{0.1}}
\psfrag{0.45}{\tiny $ $}
\psfrag{0.35}{\tiny $ $}
\psfrag{0.25}{\tiny $ $}
\psfrag{0.15}{\tiny $ $}

\psfrag{103}{\scalebox{0.48}{$10^{-2}$}}
\psfrag{102}{\scalebox{0.48}{$10^{-3}$}}

  \includegraphics[width=0.34\linewidth]{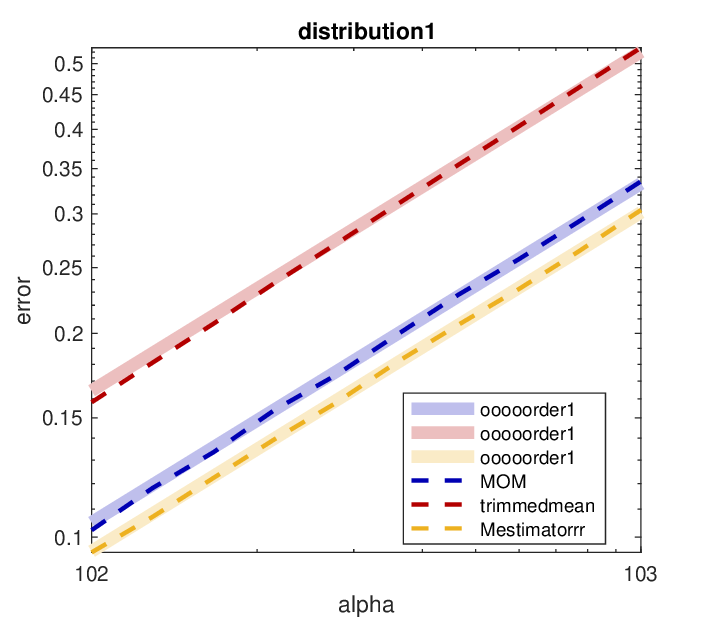}

        \vspace*{0.25in}
}%
\hspace{-1.6em}%
\subfigure[Infinite variance case, $\mathcal{P}_{1+r}$]{

\label{fig:P_1r}

\psfrag{distribution2}[][c]{$ $}
\psfrag{alpha}[cc]{\tiny Contamination fraction $\alpha$}
\psfrag{MOM}{\scalebox{0.45}{MoM}}
\psfrag{trimmedmean}{\scalebox{0.45}{TM}}
\psfrag{Mestimatorrr}{\scalebox{0.45}{M-estimator}}
\psfrag{ooooooorder2}{\scalebox{0.4}{$\mathcal{O}({\alpha}^{{r}/{(1+r)}})$}}
\psfrag{error}[Bc]{\tiny $ $}

\psfrag{2.2}{\scalebox{0.48}{2.2}}
\psfrag{1.8}{\scalebox{0.48}{1.8}}
\psfrag{1.4}{\scalebox{0.48}{1.4}}
\psfrag{1}{\scalebox{0.48}{1}}

\psfrag{2}{\tiny $ $}
\psfrag{1.6}{\tiny $ $}
\psfrag{1.2}{\tiny $ $}

\psfrag{103}{\scalebox{0.48}{$10^{-2}$}}
\psfrag{102}{\scalebox{0.48}{$10^{-3}$}}

  \includegraphics[width=0.34\linewidth]{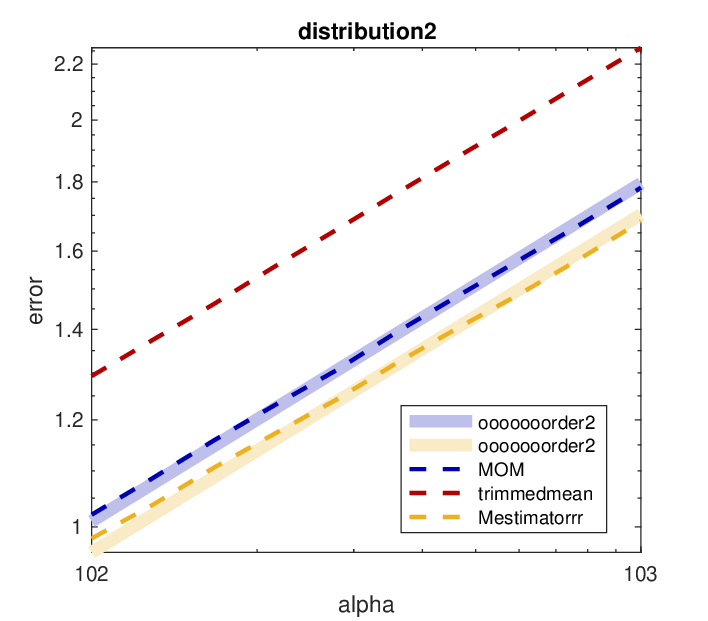}

        \vspace*{0.25in}
}%
\hspace{-1.6em}%
\subfigure[Symmetric distribution, $\mathcal{P}_{\sym}^{\epsilon_0,c}$]{

\label{fig:P_sym}

\psfrag{alpha}[cc]{\tiny Contamination fraction $\alpha$}

\psfrag{distribution5}[][c]{$ $}

\psfrag{MOM}{\scalebox{0.45}{MoM}}
\psfrag{trimmedmean}{\scalebox{0.45}{TM}}
\psfrag{Mestimatorrr}{\scalebox{0.45} {M-estimator}}
\psfrag{ooorder5}{\scalebox{0.4}{$\mathcal{O}({\alpha})$}}
\psfrag{error}[Bc]{\tiny $ $}
\psfrag{103}{\scalebox{0.48}{$10^{-2}$}}
\psfrag{102}{\scalebox{0.48}{$10^{-3}$}}

\psfrag{-2}[Bc]{\scalebox{0.48}{$10^{-2}$}}
\psfrag{-1}[Bc]{\scalebox{0.48}{$10^{-1}$}}
\psfrag{0}[Bc]{\scalebox{0.48}{$10^{0}$}}
\psfrag{1}[Bc]{\scalebox{0.48}{$10^{1}$}}
\psfrag{2}[Bc]{\scalebox{0.48}{$10^{2}$}}
    
\includegraphics[width=0.34\linewidth]{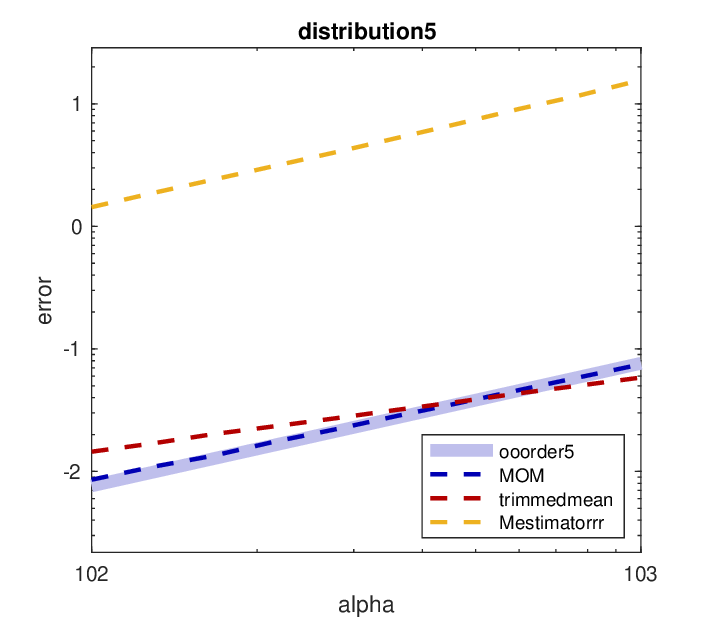}

        \vspace*{0.25in}
}%
\caption{Empirical errors align with the theoretical bounds presented over multiple classes of distributions.}
\label{fig}
\end{figure*}

In \cref{fig} we show that the empirical error of MoM aligns with the orders of the results from \cref{sec:main,sec:light_tail,sec:sym}.
We also provide the empirical errors of the trimmed mean and Catoni's M-estimator.
The experiments in \cref{fig} can be seen as an illustration of the first, second and fifth rows of Table~\ref{tab:summary}.

\begin{figure}[H]
    \centering
\psfrag{alpha}[cc]{\tiny Contamination fraction $\alpha$}

\psfrag{distribution5}[][c]{$ $}

\psfrag{MOM}{\tiny MoM}
\psfrag{trimmedmean}{\tiny TM}
\psfrag{Mestimator}{\scalebox{0.55} {M-estimator}}
\psfrag{ooorder5}{\tiny $\mathcal{O}({\alpha})$}
\psfrag{error}[Bc]{\tiny Estimation error}

\psfrag{2/3-0.2...}{\scalebox{0.5}{$\mathcal{O}(\alpha^{2/3})$}}

\psfrag{2/3-0.1...}{\scalebox{0.5}{$i=2/3-0.2$}}
\psfrag{2/3...}{\scalebox{0.5}{$i=2/3-0.1$}}
\psfrag{order6}{\scalebox{0.5}{$i=2/3$}}
\psfrag{2/3+0.15.......}{\scalebox{0.5}{$i=2/3+0.15$}}
\psfrag{2/3+0.3...}{\scalebox{0.5}{$i=2/3+0.3$}}

\psfrag{0.01}{\scalebox{0.6} {$0.01$}}
\psfrag{0.015}{\scalebox{0.6} {$ $}}
\psfrag{0.02}{\scalebox{0.6} {$0.02$}}
\psfrag{0.025}{\scalebox{0.6} {$ $}}
\psfrag{0.03}{\scalebox{0.6} {$0.03$}}
\psfrag{0.035}{\scalebox{0.6} {$ $}}
\psfrag{0.04}{\scalebox{0.6} {$0.04$}}
\psfrag{0.045}{\scalebox{0.6} {$ $}}

\psfrag{102}{\scalebox{0.6} {$10^{-3}$}}
\psfrag{103}{\scalebox{0.6} {$10^{-2}$}}
    
  \includegraphics[width=0.45\linewidth]{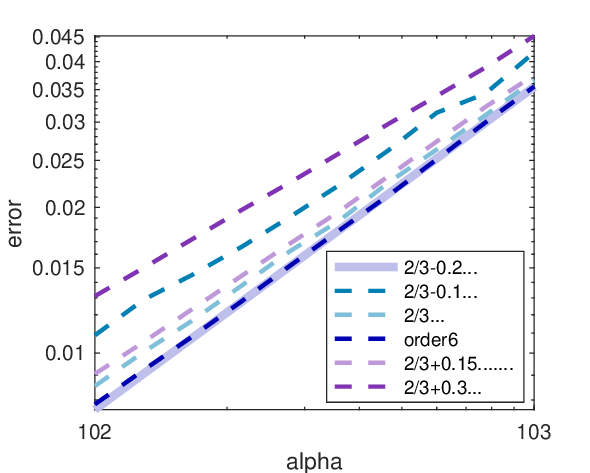}
\caption{For any $k=\ceil{4\alpha^in}$, the error is at least $\mathcal{O}(\alpha^{2/3})$ for a sub-Gaussian distribution $\p\in\mathcal{P}_{\SG}$.}
\label{fig:SG}    
\end{figure}

    \begin{table}[H]
    \centering
    \caption{Parameter values of the numerical experiments.}
    \vspace{0.1in}
    \setlength{\tabcolsep}{.15\tabcolsep}
    \begin{tabular}{ccccccc}
         Figure & Theorem  & Class & Distribution & $n$ & $\delta$  & $n_{\rep}$ \\
         \hline
        \ref{fig:P_2}  & \ref{thm:optim_mom} &  $\mathcal{P}_2$ & $\Pareto(0.45;1;0)$ & $10^6$  & $0.05$ & 100\\
         \ref{fig:P_1r}  &  \ref{thm:optim_mom_inf} &  $\mathcal{P}_{1+r}$& $\Pareto(0.75;1;0)$ & $10^6$ &  $0.05$& 100\\
         \ref{fig:P_sym}  &\ref{thm:optim_mom_sym} &  $\mathcal{P}_{\sym}^{\epsilon_0,c}$& $t_3$ & $10^7$ & $0.05$ &100\\
         \ref{fig:SG} & \ref{thm:suboptimality}&  $\mathcal{P}_{\SG}$ &  Half-normal  & $10^7$ &   $0.05$ & 100\\
         \hline
    \end{tabular}
    \label{tab:exp}
\end{table}

\textbf{Finite Variance Distributions.} In Figure~\ref{fig:P_2} we illustrate Theorem~\ref{thm:optim_mom} by considering a Pareto distribution with finite variance.
The figure shows that the empirical error of MoM aligns with the order of the upper bound from Theorem~\ref{thm:optim_mom}.

\textbf{Infinite Variance Distributions.} In Figure~\ref{fig:P_1r} we exemplify Theorem~\ref{thm:optim_mom_inf} using a Pareto distribution with infinite variance.
Again, the figure shows that the empirical error of MoM aligns with the order of the upper bound from Theorem~\ref{thm:optim_mom_inf}.

\textbf{Symmetric Distributions.} In Figure~\ref{fig:P_sym} we depict Theorem~\ref{thm:optim_mom_sym} using a Student's $t$ distribution.
The figure shows that MoM continues to align with the corresponding theoretical upper bound in Theorem~\ref{thm:optim_mom_sym}.
Interestingly, the trimmed mean does not seem to exploit symmetry (in order) as effectively as MoM does.
Although its error has not been characterized, the empirical results suggest that the trimmed mean estimator may be sub-optimal in these cases.

\textbf{Sub-Gaussian Distributions. } In Figure~\ref{fig:SG}, we show that the order $\alpha^{2/3}$ cannot be improved for half-normal distributions for any choice for the number of blocks $k$.
Specifically, we plot the error of MoM with different choices for the size of the blocks $k = \ceil{4\alpha^i n}$.
The dashed line with the steepest slope corresponds to $i=2/3$ and the asymptotic bias $\alpha^{2/3}$, in accordance with Theorem~\ref{thm:SE} above.

\end{document}